\documentclass{article}

\usepackage[final]{neurips_2025}

\usepackage[utf8]{inputenc} 
\usepackage[T1]{fontenc}    
\usepackage{hyperref}       
\usepackage{url}            
\usepackage{booktabs}       
\usepackage{amsfonts}       
\usepackage{nicefrac}       
\usepackage{microtype}      
\usepackage{xcolor}         

\usepackage{graphicx}
\usepackage{multirow}

\usepackage{algorithm}
\usepackage{algorithmic}

\usepackage{mathrsfs}
\usepackage{amsmath}
\usepackage{amsthm}

\newtheorem{theorem}{Theorem}[section]
\newtheorem{assumption}{Assumption}[section]
\newtheorem{definition}{Definition}[section]

\title{Individual Fairness In Strategic Classification}

\author{%
  Zhiqun Zuo ~~~~~~~~~~~~~  Mohammad Mahdi Khalili\\
  \texttt{zuo.167@osu.edu} ~~~~~~~~~~~~~ \texttt{khalili.17@osu.edu}~~~~~~~~ \\ \\
  Department of Computer Science and Engineering \\
   The Ohio State University\\
   Columbus, OH 43210\\
}

\begin{document}

\maketitle

\begin{abstract}
    Strategic classification, where individuals modify their features to influence machine learning (ML) decisions, presents critical fairness challenges. While group fairness in this setting has been widely studied, individual fairness remains underexplored. We analyze threshold-based classifiers and prove that deterministic thresholds violate individual fairness. Then, we investigate the possibility of using a randomized classifier to achieve individual fairness. We introduce conditions under which a randomized classifier ensures individual fairness and leverage these conditions to find an optimal and individually fair randomized classifier through a linear programming problem. Additionally, we demonstrate that our approach can be extended to group fairness notions. Experiments on real-world datasets confirm that our method effectively mitigates unfairness and improves the fairness-accuracy trade-off.
\end{abstract}

\section{Introduction}
As machine learning (ML) becomes increasingly popular in decision-making systems, the interaction between individuals and AI agents emerges as a critical issue. AI-aided applications, such as automated lending \citep{costello2020machine+}, resume screening \citep{chou2020based}, and college admissions \citep{alvero2020ai}, significantly influence people's opportunities and outcomes. A fundamental issue in these applications is fairness—ensuring that ML-based decisions do not systematically disadvantage certain individuals or groups. Various fairness notions, including tier balancing \citep{tang2023tier}, lookahead counterfactual fairness \citep{zuo2024lookahead}, and long-term fairness \citep{ge2021towards}, have been proposed to address fairness issues arising from interactions between human and AI.

A major challenge in ML-based decision-making process is strategic behavior—where individuals adjust their features to obtain more favorable outcomes \citep{hardt2016strategic, xie2024non, xie2024automating}. Because ML models operate based on predefined decision criteria, they are inherently vulnerable to such gaming. For example, students may participate in specific extracurricular activities if they believe doing so will improve their college admissions prospects. The field of strategic classification seeks to address these challenges by designing classifiers that account for individuals' ability to adjust their features \citep{hardt2016strategic}. Prior work has examined diverse aspects of strategic classification, including heterogeneous strategic behavior \citep{sundaram2023pac}, incentive-aware risk minimization \citep{zhang2021incentive}, and performative prediction, where data distributions evolve in response to decision rules \citep{perdomo2020performative}.

Fairness in ML has been widely studied, leading to the development of multiple fairness criteria such as statistical parity \citep{besse2022survey, zuo2025postprocessingfairregressionexplainable}, equal opportunity \citep{hardt2016equality}, counterfactual fairness \citep{kusner2017counterfactual, zuo2023counterfactually}, and individual fairness \citep{petersen2021post}. However, fairness in strategic settings introduces new complexities. While some research has explored how strategic behavior may mitigate or exacerbate unfairness \citep{zhang2022fairness}, much of the focus has been on group fairness \citep{diana2024minimax, estornell2023group}. Less attention has been given to individual fairness even as strategic behavior amplifies disparities near decision thresholds.

Beyond fairness in prediction outcomes, strategic classification raises concerns about fairness in social burden (i.e., the cost individuals must pay to meet decision criteria) \citep{milli2019social}. Since different individuals face varying levels of difficulty in modifying their features, strategic decision-making can lead to unfair cost disparities. Prior work has examined the trade-off between social cost and institutional utility \citep{milli2019social} and proposed group-specific thresholds to mitigate cost disparities \citep{keswani2023addressing}. However, ensuring individual fairness with respect to individual cost remains an open problem. For instance, two individuals with similar underlying qualifications may be forced to pay different costs to receive the same decision simply because one lies just below the decision threshold.

In this work, we analyze binary strategic classification and demonstrate that deterministic threshold-based classifiers inherently violate individual fairness. To address this, we propose a randomized thresholding approach, proving that individual fairness can be achieved if the threshold distribution satisfies certain conditions. Furthermore, we show that the optimal threshold distribution can be efficiently computed via linear programming. Finally, we investigate the challenge of simultaneously achieving individual and group fairness, proving that this can be accomplished by adding a constraint to the threshold distribution—while still preserving the linear programming formulation.

Our results provide a principled framework for fair decision-making in strategic classification, ensuring equitable treatment both in terms of predictions and the costs individuals must bear to adapt. By addressing fairness from both an individual and group perspective, our approach offers a practical and computationally efficient solution to mitigate unfairness in real-world strategic decision-making systems.
\section{Notations and Preliminaries}\label{sec:notation}
\subsection{Strategic Classification}
In this paper, we consider a binary classification problem in a strategic environment.  The goal is to find a classifier $f: \mathcal{X} \mapsto \mathcal{Y}$ to predict binary label $Y$  from the observed features $X$ with $X \in \mathcal{X}$ and $Y \in \mathcal{Y} = \{0, 1\}$. In a strategic setting, $f$ is being used by a decision maker/institution to identify qualified individuals for a job or position, and individuals can improve their features $x$ to get a better result.\footnote{Capital letters are being used for random variables, and small letters are being used for realizations.} When an individual adjust feature $x$ to $x'$, a cost will be incurred which is represented by cost function $c: \mathcal{X} \times \mathcal{X} \mapsto \mathbb{R}^{+} \cup \{0\}$ with $c(x,x) = 0, \forall x \in \mathcal{X}$. The institution and individuals aim at maximizing their utility functions. The utility function of the institution is the accuracy on the improved/adjusted features, i.e.,
$
    \mathscr{U}_{inst}(f) = \Pr\{f(X') = Y\}. $
    
The individuals need to consider both the gain from the classification and the cost they need to pay for the feature improvement. So, their utility is defined as follows,
$    \mathscr{U}_{indiv}(f) = [f(x') - f(x)] - \lambda c(x, x'),$ 
where $\lambda$ is a positive constant measuring the relative importance of the cost to people. The best response of a person is $x'$ such that $\mathscr{U}_{indiv}(f)$  is maximized. We denote the best response to classifier $f$ as $\Delta x(f) = \arg\max_{x'} \mathscr{U}_{indiv}(f)$.

In this paper, we will analyze the strategic classification problem where $x$ is a general $d$-dimensional feature vectors. The following definition and assumption will be used throughout this paper.

\begin{definition}[$d$-dimensional binary classification problem]
    A $d$-dimensional binary classification problem is a problem of finding a classifier $f: \mathcal{X}\rightarrow \mathcal{Y}$ with $\mathcal{X} \subseteq \mathbb{R}^d$ and $\mathcal{Y} = \{0, 1\}$.
\end{definition}    

\begin{assumption}\label{assum:2}
In a $d$-dimensional binary classification problem, we assume there is a function $l(x): \mathbb{R}^{d} \mapsto \mathbb{R}$, such that 
{\small
\begin{align}\label{eq:l_1}
    l(x_{1}) \geq l(x_{2}) ~~~ \text{iff} ~~~ \Pr\{Y = 1|X = x_{1}\} \geq \Pr\{Y =1 |X = x_{2}\},
\end{align}
}
and 
{\small
\begin{align}\label{eq:l_2}
    l(x_{1}) \leq l(x_{2}) ~~~ \text{iff} ~~~ \Pr\{Y = 1|X = x_{1}\} \leq \Pr\{Y =1 |X = x_{2}\}.
\end{align}
}
We further assume that $l(x)$ is a continuous and differentiable function  and  $l(X) \in [C, D]$.
\end{assumption}
Note that we adopt Assumption \ref{assum:2}, as it is commonly used in the literature \citep{milli2019social}. The existence of $l(x)$ ensures that the problem can be addressed using threshold-based classifiers on $l(x)$. In many real-world scenarios (for example, when a bank decides whether to issue a loan based on an applicant's credit score) $l(x)$ could simply be $x$, since a higher credit score typically indicates a lower risk of default. When $x$ lies in a high-dimensional space, a reasonable choice for $l(x)$ is $l(x) = \Pr\{Y = 1 \mid X = x\}$. 

In this paper, we focus on \textbf{threshold classifiers}. A deterministic threshold classifier in a $d$-dimensional binary classification problem is given by,
\begin{align}\label{eq:thr}
    f(x; t) = \mathbf{1}\left[l(\Delta x(f)) \geq t\right],
\end{align}
where $t$ is a fixed threshold associated with classifier $f(x; t)$, and  $\mathbf{1}(\cdot)$ is an indicator function \citep{kleene1971introduction}. Note that $\Delta x$ is used in \eqref{eq:thr} because the classifier is applied to adjusted features rather than original features.

\subsection{Randomized Classifier}
We denote a randomized classifier $\mathscr{F}$ as a random mapping from $\mathcal{X}$ to $\mathcal{Y}$. Each randomized classifier is associated with a function family $\mathcal{F}$ and a probability distribution $P$ over $\mathcal{F}$. Given $x$, the output distribution is, 
\begin{align}
    \Pr \{\mathscr{F}(x) = y\} = \sum_{f \in \{h| h(x) = y, h\in \mathcal{F}\}} P(f).
\end{align}
  A randomized threshold classifier is associated with  a distribution over $\mathcal{F} = \{f(x; t)|t \in \mathcal{T}\}$, where $\mathcal{T}$ is the set of possible thresholds. Let $p(t)$ be a probability density function over $\mathcal{T}$. Then, for $\mathscr{F}$ associated with $p(t)$, we have,  
\begin{align}
    \Pr\{\mathscr{F}(x) = y\} = \int_{t \in \left[\mathcal{T} \cap \{\tau| f(x; \tau) = y\}\right]}p(t)\mathrm{d}t.
\end{align}
Note that deterministic classifier $f(x; t_{0})$ can be regarded as a special randomized classifier with $p(t) = \delta(t - t_{0})$, where $\delta(\cdot)$ is the unit impulse function (also referred to as Dirac delta function \citep{arfken2000mathematical}). For the rest of the paper, we are focusing on randomized and deterministic threshold classifiers. 


\subsection{Best Response Cost}
Since in this paper we consider a strategic setting, the individuals always choose the best response to the classifier announced by the institution/decision maker. We define the Best Response Cost (BRC) associated with an individual with feature $x$ as follows, 
$
    c_{f}(x) = c(x, \Delta x(f)).
$

For a randomized classifier $\mathscr{F}$, BRC is 
\begin{align}
    c_{\mathscr{F}}(x) = \sum_{f \in \mathcal{F}}c(x, \Delta x(f))P(f) = \mathbb{E}\{c(x, \Delta x(\mathscr{F})\}.
\end{align}
We assume that cost function $c(x, x')$ depends on the distance between $l(x)$ and $l(x')$. 
More precisely, we consider the following cost function,
\begin{align}\label{eq:cost_assumption}
    c(x, x') = \begin{cases}
        g(l(x') - l(x)) & l(x') \geq l(x), \\
        \infty & l(x') < l(x).
    \end{cases}
\end{align}
where $g: \mathbb{R} \mapsto \mathbb{R}$ is differentiable and is a strictly increasing function satisfying $g(0) = 0$. When $l(x) < l(x')$, the cost is infinite because we are considering a threshold-based classifier, and decreasing $l(x)$ will decrease the probability of being positively classified. Therefore, we have the following best response function,
\begin{align}
    l(\Delta(f(x; t))) = \begin{cases}
        l(x) & l(x) < \left[t - g^{-1}(\frac{1}{\lambda})\right] ~~ \text{or} ~~ l(x) \geq t, \\
        t & \left[t - g^{-1}(\frac{1}{\lambda})\right] \leq l(x) < t.
    \end{cases}
\end{align}
For notational convenience, we denote the constant $g^{-1}(\frac{1}{\lambda})$ as $\mathcal{C}$. When $l(x) < t - \mathcal{C}$, the individual cannot benefit from increasing $l(x)$ due to the high cost. When $l(x) \geq t$, the individual does not need to change the feature to change the outcome. Only when $t - \mathcal{C} \leq l(x) < t$, increasing $l(x)$ to $t$ will result in positive utility $\mathscr{U}_{indiv}(f)$.

\subsection{Individual Fairness}
Dwork \textit{et al.} \cite{dwork2012fairness} proposes the notion of individual fairness that requires people with similar features to be treated similarly. We can adopt a similar definition for a strategic setting. Mathematically, individual fairness with respect to (w.r.t.) the expectation of  outcome/decision $\mathscr{F}$ in a strategic setting is defined as follows, 
\begin{equation}
    \left|\mathbb{E}\{\mathscr{F}(x_{1})\} - \mathbb{E}\{\mathscr{F}(x_{2})\}\right| \leq M_{p}\left\|x_{1} - x_{2}\right\|_2,
\end{equation}
where $M_{p}$ is a predefined constant, $\left\|x_{1} - x_{2}\right\|$ is the Euclidean distance between $x_{1}$ and $x_{2}$.  We can also adopt individual fairness  with respect to BRC. In particular, we say the individual fairness with respect to BRC is satisfied if the following holds,
\begin{align}
    \left|c_{\mathscr{F}}(x_{1}) - c_{\mathscr{F}}(x_{2})\right| \leq M_{c}\left\|x_{1} - x_{2}\right\|_2.
\end{align}
\section{Individual Fairness with Repspect to BRC}\label{sec:if_cost}
In this section, we show for certain $l(x)$, a classifier with a deterministic threshold can never satisfy individual fairness with respect to BRC. Then, we identify conditions under which  a randomized classifier satisfies individual fairness. We also  discuss how to find a fair optimal randomized  classifier. 

Consider a deterministic classifier $f(x; t_{0})$. Note that if $t_{0}\leq C + \mathcal{C}$, then all the individuals after choosing the best response will be assigned label $1$, and  $f(x; t_{0})$ cannot distinguish between qualified and unqualified individuals. Therefore, in the following theorem, we focus in case where $t_{0}\in (C + \mathcal{C}, D)$. 
\begin{theorem}\label{the:det_no_fair_1}
    Consider a $d$-dimensional binary classification problem with deterministic classifier $f(x; t_{0})$ with $t_{0} \in (C + \mathcal{C}, D)$. If $l(x)$ is reverse Lipschitz continuous, i.e.
    \begin{align}\label{eq:reverse_Lipschitz}
        \left|l(x_{1}) - l(x_{2})\right| \geq L_{l}\left\|x_{1} - x_{2}\right\|_{2}, L_{l} < \infty, \forall x_{1}, x_{2},
    \end{align}
    then for any constant $M_{c}$, there exist $x_{1}, x_{2} \in \mathcal{X}$ such that $\left|c_{f}(x_{1}) - c_{f}(x_{2})\right| > M_{c}\left\|x_{1} - x_{2}\right\|_{2}$.
\end{theorem}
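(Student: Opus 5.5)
The idea is to exploit the jump in the best response cost at the lower edge of the manipulation region, $l(x)=t_0-\mathcal{C}$. By the best response function, every individual with $l(x)<t_0-\mathcal{C}$ stays put, so $c_f(x)=0$. Every individual with $t_0-\mathcal{C}\le l(x)<t_0$ moves to $l=t_0$ and pays $c_f(x)=g(t_0-l(x))$. As $l(x)\downarrow t_0-\mathcal{C}$, this cost tends to $g(\mathcal{C})=1/\lambda>0$. The BRC therefore has a jump of size $1/\lambda$ across the level set $\{l=t_0-\mathcal{C}\}$. I will choose two points on either side of this level set, with $l$-values arbitrarily close to each other. The reverse Lipschitz condition then forces the points themselves to be close in Euclidean distance, while their costs differ by roughly $1/\lambda$.

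\textbf{Step 1: the level $t_0-\mathcal{C}$ lies in the interior of $l(\mathcal{X})$.} Since $t_0\in(C+\mathcal{C},D)$, we have $t_0-\mathcal{C}\in(C,D-\mathcal{C})\subset(C,D)$. I will take the range of $l$ on $\mathcal{X}$ to be an interval containing a neighbourhood of $t_0-\mathcal{C}$. This follows from continuity of $l$ when $\mathcal{X}$ is connected, and I will state it as the implicit interpretation of $l(X)\in[C,D]$.

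\textbf{Step 2: choose the points.} For small $\epsilon>0$, pick $x_1$ with $l(x_1)=t_0-\mathcal{C}$ and $x_2$ with $l(x_2)=t_0-\mathcal{C}-\epsilon$. Both exist by Step 1 and the intermediate value theorem.

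\textbf{Step 3: compare costs and distances.} The costs are $c_f(x_1)=g(\mathcal{C})=1/\lambda$ and $c_f(x_2)=0$, so $|c_f(x_1)-c_f(x_2)|=1/\lambda$. By the reverse Lipschitz inequality \eqref{eq:reverse_Lipschitz},
\begin{align*}
\|x_1-x_2\|_2\le \frac{|l(x_1)-l(x_2)|}{L_l}=\frac{\epsilon}{L_l}.
\end{align*}
Hence $M_c\|x_1-x_2\|_2\le M_c\epsilon/L_l$. Choosing $\epsilon<L_l/(\lambda M_c)$ gives $|c_f(x_1)-c_f(x_2)|>M_c\|x_1-x_2\|_2$, which proves the theorem.

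The main subtlety is the constant in the reverse Lipschitz condition, which must satisfy $L_l>0$. The statement writes $L_l<\infty$, but the argument needs a positive lower constant, and I will interpret the hypothesis that way. The other point needing care is Step 1, namely that the level set $\{l=t_0-\mathcal{C}\}$ and the nearby level $t_0-\mathcal{C}-\epsilon$ are both attained in $\mathcal{X}$.
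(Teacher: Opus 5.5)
Your proposal is correct and follows the paper's proof essentially line for line: the same pair of points at levels $t_0-\mathcal{C}$ and $t_0-\mathcal{C}-\epsilon$, the same cost gap $1/\lambda$, the same bound $\|x_1-x_2\|_2\le \epsilon/L_l$, and the same choice $\epsilon<L_l/(\lambda M_c)$. Your remarks that $L_l$ must be strictly positive and that both levels must be attained in $l(\mathcal{X})$ make explicit what the paper's argument uses without stating it.
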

Condition \eqref{eq:reverse_Lipschitz} ensures that the change in $x$ can effectively change $l(x)$ to affect the classification result. From the proof of Theorem~\ref{the:det_no_fair_1} in Appendix~\ref{proof_det_no_fair_1}, BRC is not continuous at $x_0$ where $l(x_0) = t_{0} - \mathcal{C}$, and individual fairness does not hold. Therefore, we relax a deterministic threshold and consider a randomized threshold. The following theorem implies that we can achieve individual fairness with respect to BRC by adding constraints on distribution $p(t)$.
\begin{theorem}\label{the:cost_fair_d}
    Consider a $d$-dimensional binary classification problem, and let $C_{l} = \max_{x \in \mathcal{X}}\left\|\nabla_{x}l(x)\right\|_{2}$, $C_{g} = \max_{l \in [0, \mathcal{C}]}g'(l)$, and $\mathcal{F} = \{f(x; t)|t \in (C + \mathcal{C}, D)\}$ for randomized classifier $\mathscr{F}$. If $p(t) \leq L_{c}$ with $L_{c} = \min\{\frac{\lambda M_{c}}{C_{l}}, \frac{M_{c}}{C_{l}C_{g}\mathcal{C}}\}$, then for constant $M_{c}$, $\forall x_{1}, x_{2} \in \mathcal{X}$, we have,
    $$
    \left|c_{\mathscr{F}}(x_{1}) - c_{\mathscr{F}}(x_{2})\right| \leq M_{c}\left\|x_{1} - x_{2}\right\|_2.
    $$
\end{theorem}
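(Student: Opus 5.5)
The plan is to reduce the problem to one dimension through $u = l(x)$, bound the Lipschitz constant of the resulting scalar cost function, and then return to $x$ using $C_l$. Write the BRC for threshold $t$ as $c(x,\Delta x(f(\cdot;t))) = h(l(x),t)$, where
\[
h(u,t) = g(t-u)\,\mathbf{1}[t-\mathcal{C} \le u < t].
\]
Then
\[
c_{\mathscr{F}}(x) = \phi(l(x)), \qquad \phi(u) = \int_{u}^{u+\mathcal{C}} g(t-u)\,p(t)\,\mathrm{d}t ,
\]
with $p$ supported on $(C+\mathcal{C},D)$ and extended by zero elsewhere. It then suffices to show $|\phi(u_1)-\phi(u_2)| \le \frac{M_c}{C_l}|u_1-u_2|$. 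Indeed, $l$ is differentiable with $\|\nabla l\|_2 \le C_l$, so by the mean value theorem along the segment $|l(x_1)-l(x_2)| \le C_l\|x_1-x_2\|_2$. This step requires $\mathcal{X}$ convex, or at least segments in a domain where the gradient bound holds; I would state this as an implicit assumption.

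To bound the Lipschitz constant of $\phi$, take $u_1<u_2$ and split the difference as $\phi(u_2)-\phi(u_1) = A + B$. The first term
\[
A = \int_{u_2}^{u_2+\mathcal{C}}\bigl[g(t-u_2)-g(t-u_1)\bigr]p(t)\,\mathrm{d}t
\]
handles the common window, in the form with the boundary pieces removed. The second term $B$ collects the boundary pieces: the region $t\in[u_1,u_2)$ is counted only for $u_1$ and has integrand $g(t-u_1)$, while the region $t\in(u_1+\mathcal{C},u_2+\mathcal{C}]$ is counted only for $u_2$ and has integrand $g(t-u_2)$.

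The bounds go as follows. For the common window, the arguments $t-u_i$ lie in $[0,\mathcal{C}]$ when the windows overlap. There $|g(t-u_2)-g(t-u_1)| \le C_g|u_2-u_1|$, and the window has length at most $\mathcal{C}$, so $|A| \le C_g\mathcal{C}L_c|u_2-u_1|$. For the boundary pieces, the key observation is that near the lower edge $g(t-u_1)$ is small, namely at most $C_g(u_2-u_1)$. This piece is therefore second order, bounded by $C_g L_c(u_2-u_1)^2$, and can be absorbed. At the upper edge the integrand is $g(t-u_2)\approx g(\mathcal{C}) = 1/\lambda$, giving a contribution of at most $\frac{1}{\lambda}L_c(u_2-u_1)$.

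Combining these is the main obstacle: the pieces must combine into the bound $\max\{\lambda^{-1}, C_g\mathcal{C}\}\,L_c|u_2-u_1|$, rather than their sum. The cleanest route I would try first is to differentiate $\phi$ for a.e. $u$ using the Leibniz rule:
\[
\phi'(u) = g(\mathcal{C})\,p(u+\mathcal{C}) - g(0)\,p(u) - \int_u^{u+\mathcal{C}} g'(t-u)\,p(t)\,\mathrm{d}t .
\]
Since $g(0)=0$ and $g(\mathcal{C})=1/\lambda$, this reduces to
\[
\phi'(u) = \tfrac1\lambda\,p(u+\mathcal{C}) - \int_u^{u+\mathcal{C}} g'(t-u)\,p(t)\,\mathrm{d}t .
\]
Here the first term lies in $[0, L_c/\lambda]$. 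The integral is nonnegative, because $g$ is increasing, and lies in $[0, C_gL_c\mathcal{C}]$. A difference of two numbers in $[0,a]$ and $[0,b]$ has absolute value at most $\max\{a,b\}$. Hence
\[
|\phi'| \le L_c\max\{\lambda^{-1},C_g\mathcal{C}\} \le \frac{M_c}{C_l}
\]
by the definition of $L_c$.

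For the rigorous version, $\phi$ is absolutely continuous because $p$ is bounded, so $\phi$ is Lipschitz with this constant. Composing with $l$ then gives the claim.
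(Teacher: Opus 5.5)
Your proposal is correct and follows essentially the same route as the paper: both differentiate $c_{\mathscr{F}}(x)=\int_{l(x)}^{l(x)+\mathcal{C}} g(t-l(x))\,p(t)\,\mathrm{d}t$ with the Leibniz rule, use $g(0)=0$ and $g(\mathcal{C})=1/\lambda$, and bound the difference of the two nonnegative terms by the larger of $L_c/\lambda$ and $C_g L_c \mathcal{C}$, which the paper does through an explicit two-case split. Your extra points fill in steps the paper leaves implicit, since it stops at $\|\nabla_x c_{\mathscr{F}}(x)\|_2 \le M_c$: you treat $\phi'$ only almost everywhere and recover the Lipschitz bound from absolute continuity, and you note that going from $\|\nabla l\|_2 \le C_l$ to $|l(x_1)-l(x_2)| \le C_l \|x_1-x_2\|_2$ requires the segments to lie in $\mathcal{X}$.
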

Theorem~\ref{the:cost_fair_d} shows that by relaxing the hard threshold, the individuals with feature vector $x_0$ such that $l(x_0) = t_{0} - \mathcal{C} - \epsilon$ ($\epsilon$ is a positive constant) now also have the opportunity to change the classification result by paying a certain cost. 

\subsection{Fair Optimal Randomized Classifier}\label{sec:3.1}
Theorem~\ref{the:cost_fair_d} demonstrates the existence of randomized classifiers that can achieve individual fairness. However, not all threshold distributions yield high accuracy, so it is necessary to identify one that achieves the highest possible accuracy. In this section, our goal is to find a randomized classifier that minimizes the error rate while satisfying individual fairness. Let $e(t)$ denote the error rate of $f(\cdot; t)$. Then, we have,
{
\begin{align}
    e(t) =  \int_{C}^{t - \mathcal{C}}\Pr\{Y = 1|l(X) = l\}p_{L}(l)\mathrm{d}l + 
     \int_{t - \mathcal{C}}^{D}\Pr\{Y = 0|l(X) = l\}p_{L}(l)\mathrm{d}l,
\end{align}
}
where $p_{L}(l)$ is the probability density function of $L = l(X)$. Note that
$\Pr\{l_{1} \leq l(X) \leq l_{2}\} = \int_{l_{1}}^{l_{2}}p_{L}(l)\mathrm{d}l.$ For notational convenience, we denote $\rho_{0}(l) = \Pr\{Y = 0|l(X) = l\}p_{L}(l)$ and $\rho_{1}(l) = \Pr\{Y = 1|l(X) = l\}p_{L}(l)$. Hence, to find the optimal classifier $\mathscr{F}$, we need to solve the following optimization problem,
\begin{align}\label{opt:cost_1}
    p_{c}(t) = \arg \min_{p(t)} \mathbb{E}_{t\sim p(t)}[e(t)] = \arg\min_{p(t)} \int_{C + \mathcal{C}}^{D}e(t)p(t)\mathrm{d}t \nonumber \\
    s.t. ~~ \int_{C + \mathcal{C}}^{D}p(t)\mathrm{d}t = 1, ~~ 0 \leq p(t) \leq L_{c}.
\end{align}
In the above optimization problem, $L_c$ is a constant that controls the fairness level. Smaller $L_c$ leads to a better fairness level. 
Solving problem~\ref{opt:cost_1} is difficult because $p(t)$ is a general function, and convex or linear optimization techniques cannot be directly used. To tackle the challenge, we consider an approximated solution where $p_{c}(t)$ is a \textit{piecewise constant function}. Given a hyper-parameter $K$, we assume that $(C + \mathcal{C}, D)$ can be divided into $K$ bins with the $k$-th bin corresponding to $(C + \mathcal{C} + \frac{(k - 1)(D - C - \mathcal{C})}{K}, C + \mathcal{C} + \frac{k(D - C - \mathcal{C})}{K}]$ 
and $p_{c}(t)$ has a constant value in each bin. We denote these constant values as $\{p_{c1}, p_{c2}, ..., p_{cK}\}$, and for simplicity, we denote the start point of the $k$-th bin as $s_{k}$, i.e. $s_{k} = C + \mathcal{C} + \frac{(k - 1)(D - C - \mathcal{C})}{K}$\footnote{We use $s_{K + 1}$ to denote the end point of the last bin even though there is no actual $(K+1)$-th bin.}. Then, the average error rate for a  randomized classifier is given by, 
\begin{align}
    \int_{C + \mathcal{C}}^{D} e(t)p(t)\mathrm{d}t 
    = \sum_{k=1}^{K}p_{ck}\int_{s_{k}}^{s_{k + 1}}\left(\int_{C}^{t - \mathcal{C}}\rho_{1}(l)\mathrm{d}l + \int_{t - \mathcal{C}}^{D}\rho_{0}(l)\mathrm{d}l\right)\mathrm{d}t.
\end{align}
Define $A_{ck}$ as
\begin{align}\label{eq:A_ck}
    A_{ck} = \int_{s_{k}}^{s_{k + 1}}\left(\int_{C}^{t - \mathcal{C}}\rho_{1}(l)\mathrm{d}l + \int_{t - \mathcal{C}}^{D}\rho_{0}(l)\mathrm{d}l\right)\mathrm{d}t.
\end{align}
Because $A_{ck}$ is a constant and determined solely by the distribution of $l(X)$, the optimization problem~\ref{opt:cost_1} can be translated as the following linear optimization problem, 
\begin{align}
    \min \sum_{k=1}^{K}A_{ck}p_{ck}, ~~~ s.t. ~~~ \sum_{k=1}^{K}\frac{p_{ck}(D - C - \mathcal{C})}{K} = 1, ~~~ 0 \leq p_{ck} \leq L_{c}.
\end{align}
The above problem can be efficiently solved using  linear programming techniques (e.g., simplex method \cite{gale2007linear}). Algorithm~\ref{Alg1} and \ref{Alg2} shows how to find the optimal randomized classifier and use it to make a prediction.

\begin{algorithm}[h]
    \caption{Finding optimal randomized classifier satisfying individual fairness w.r.t. BRC}\label{Alg1}
    \textbf{Input:} Training data $\mathcal{D} = \{x_{i}, y_{i}\}_{i=1}^{N}$, $\lambda$, $M_{c}$, function $g$, $l$
    \begin{algorithmic}[1]
        \STATE $\mathcal{C} \leftarrow g^{-1}(\frac{1}{\lambda})$, $L_{c} \leftarrow \min\{\frac{\lambda M_{c}}{C_{l}}, \frac{M_{c}}{C_{l}C_{g}\mathcal{C}}\}$
        \STATE Estimate $A_{ck}$ from data with Eq.~\ref{eq:A_ck}
        \STATE Solve the following optimization problem,
        \begin{eqnarray}
              p_{ck} = \arg \min \sum_{k=1}^{K}A_{ck}p_{ck}, ~~~ s.t. ~~~ \sum_{k=1}^{K}\frac{p_{ck}(D - C - \mathcal{C})}{K} = 1, ~~~ 0 \leq p_{ck} \leq L_{c} \nonumber
        \end{eqnarray} 
    \end{algorithmic}
    \textbf{Output:} $p_{ck}$
\end{algorithm}

\begin{algorithm}[h]
    \caption{Inference with a randomized classifier}\label{Alg2}
    \textbf{Input:} Data point $x$, threshold distribution $p_{ck}$, intervals $(s_{k}, s_{k + 1})$, $k \in \{1, ..., K\}$, function $l$
    \begin{algorithmic}[1]
        \STATE Sample $k$ from distribution $\Pr\{\mathcal{K} = k\} = \frac{p_{ck}}{\sum_{k = 1}^{K}p_{ck}}$
        \STATE $t \leftarrow s_{k} + \eta$, $\eta$ is a random noise sampling from a uniform distribution on $[0, s_{k + 1} - s_{k}]$
        \STATE $\hat{y} \leftarrow \mathbf{1}[l(x) \geq t]$
    \end{algorithmic}
    \textbf{Output:} $\hat{y}$
\end{algorithm}

\section{Individual Fairness with Respect to Outcome}\label{sec:prediction}
In this section, we show that our proposed randomized classifier can also be used to improve individual fairness with respect to decision/outcome.  Given a randomized classifier $\mathscr{F}$, we define the average prediction outcome as follows, 
$\hat{Y}_{\mathscr{F}}(x) = \sum_{f \in \mathcal{F}}f(x)P(f) = \mathbb{E}\{ \mathscr{F}(x)\}.
$

Individual fairness with respect to outcome with   constant $M_{p}$ requires  
$|\hat{Y}_{\mathscr{F}}(x_{1}) - \hat{Y}_{\mathscr{F}}(x_{2})| \leq M_{p}||x_{1} - x_{2}||_2, \forall x_{1}, x_{2} \in \mathcal{X}$. 
For certain $l(x)$, we can prove  any deterministic classifier cannot satisfy this constraint.
\begin{theorem}\label{the:prediction_fairness_1}
    Consider a $d$-dimensional classification problem and a randomized classifier $\mathscr{F}$ with $p(t) = \delta(t_{0})$ and $t_{0} \in (C + \mathcal{C}, D)$. If $l(x)$ is reverse Lipschitz continuous, i.e.,
    \begin{align}
        \left|l(x_{1}) - l(x_{2})\right| \geq L_{l}\left\|x_{1} - x_{2}\right\|_{2}, L_{l} < \infty, \forall x_{1}, x_{2},
    \end{align}
    then for any constant $M_{p}$, there exists $x_{1}, x_{2} \in \mathcal{X}$, such that $\left|\hat{Y}_{\mathscr{F}}(x_{1}) - \hat{Y}_{\mathscr{F}^{(1)}}(x_{2})\right| > M_p\left\|x_{1} - x_{2}\right\|_{2}$.
\end{theorem}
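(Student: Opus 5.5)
The plan is to show that the expected outcome $\hat{Y}_{\mathscr{F}}$ of the deterministic classifier jumps at the level set $l(x)=t_0-\mathcal{C}$, which is the same mechanism as in Theorem~\ref{the:det_no_fair_1}. We read $\hat{Y}_{\mathscr{F}^{(1)}}$ in the statement as $\hat{Y}_{\mathscr{F}}$. With $p(t)=\delta(t-t_0)$ we have $\hat{Y}_{\mathscr{F}}(x)=f(x;t_0)=\mathbf{1}[l(\Delta x(f))\geq t_0]$.

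First, compute the outcome from the best-response formula of Section~\ref{sec:notation}. If $l(x)\geq t_0-\mathcal{C}$, the individual either already has $l(x)\geq t_0$, or moves to $l=t_0$; in both cases $\hat{Y}_{\mathscr{F}}(x)=1$. If $l(x)<t_0-\mathcal{C}$, the individual stays put and $l(x)<t_0$, so $\hat{Y}_{\mathscr{F}}(x)=0$. Hence $\hat{Y}_{\mathscr{F}}(x)=\mathbf{1}[l(x)\geq t_0-\mathcal{C}]$.

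Second, construct the two points. Since $t_0\in(C+\mathcal{C},D)$, the level $t_0-\mathcal{C}$ lies in the interior of $(C,D-\mathcal{C})\subseteq(C,D)$. Because $l$ is continuous and its range covers $[C,D]$, we can pick $x_2$ with $l(x_2)<t_0-\mathcal{C}$. Then, for any $\epsilon>0$, we pick $x_1$ with $l(x_1)=t_0-\mathcal{C}$ and $\|x_1-x_2\|_2<\epsilon$. To get such an $x_1$, take a path from $x_2$ toward a point with $l\geq t_0-\mathcal{C}$ and use the intermediate value theorem: along it, the first point $x_1$ where $l=t_0-\mathcal{C}$ exists. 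Then choose $x_2$ on the same path just before $x_1$, so that $\|x_1-x_2\|_2$ is arbitrarily small. This gives $|\hat{Y}_{\mathscr{F}}(x_1)-\hat{Y}_{\mathscr{F}}(x_2)|=1$ while $\|x_1-x_2\|_2$ can be made as small as we like.

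Third, conclude. Take $\|x_1-x_2\|_2<1/M_p$; then $1>M_p\|x_1-x_2\|_2$, which is the claimed violation. The reverse Lipschitz condition guarantees $x_1\neq x_2$ whenever $l(x_1)\neq l(x_2)$. It also lets us quantify the construction: $\|x_1-x_2\|_2\leq |l(x_1)-l(x_2)|/L_l$. So we may simply choose $x_2$ with $t_0-\mathcal{C}-l(x_2)<L_l/M_p$, provided such points exist near the level set.

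The main obstacle is the existence of points on both sides of the level set that are arbitrarily close. This requires enough connectedness of $\mathcal{X}$, or that $l(\mathcal{X})$ fill an interval around $t_0-\mathcal{C}$. I would handle it by assuming, as in the proof of Theorem~\ref{the:det_no_fair_1}, that $\mathcal{X}$ is path-connected (e.g.\ convex) with $l(\mathcal{X})=[C,D]$, and then applying the intermediate value theorem along a segment.
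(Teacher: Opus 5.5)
Your proof is correct. The quantitative variant at the end of your third paragraph is exactly the paper's argument: set $l(x_1)=t_0-\mathcal{C}$, place $l(x_2)$ a gap $\epsilon<L_l/M_p$ below it, and use $\|x_1-x_2\|_2\le \epsilon/L_l$. Your primary construction, by contrast, is a genuinely different way to get the close pair. You take the first hitting point of $\{l\ge t_0-\mathcal{C}\}$ along a path and points just before it. This uses only continuity of $l$ and of the path, never the reverse Lipschitz hypothesis. So it proves the conclusion for any continuous $l$ on a path-connected $\mathcal{X}$ whose image meets both sides of $t_0-\mathcal{C}$.

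The paper's route needs no connectedness of $\mathcal{X}$. It does need $L_l>0$ (the ``$L_l<\infty$'' in the statement has to be read that way). It also silently assumes that $l(\mathcal{X})$ contains $t_0-\mathcal{C}$ and $t_0-\mathcal{C}-\epsilon$ for all small $\epsilon$. You were right to flag this range issue as the real obstacle.

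Your route is also the more meaningful one in genuinely multidimensional settings. If $d\ge 2$ and $\mathcal{X}$ contains an open ball, reverse Lipschitz with $L_l>0$ makes $l$ injective. But no continuous injective real-valued function exists on such a ball, so the paper's hypothesis effectively restricts to one-dimensional feature sets.

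Two small slips. First, $l(x_1)\neq l(x_2)\Rightarrow x_1\neq x_2$ holds for every function; what reverse Lipschitz actually buys is the distance bound. Second, the proof of Theorem~\ref{the:det_no_fair_1} does not assume path-connectedness; it relies on the same implicit range assumption as the paper's proof of this statement.
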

Similar to Theorem~\ref{the:det_no_fair_1}, Theorem~\ref{the:prediction_fairness_1} shows that the expected outcome is not continuous at $x_0$ satisfying $l(x_0) = t_{0} - \mathcal{C}$. When we use a randomized classifier, of which the distribution of thresholds has no impulse component, the individual fairness with respect to outcome can be satisfied. The following theorem illustrates this point.

\begin{theorem}\label{the:pred_fair_d}
    Consider a $d$-dimensional binary classification problem, and let $C_{l} = \max_{x}\left\|\nabla_{x}l(x)\right\|_{2}$, and $\mathcal{F} = \{f(x; t)|t \in (C + \mathcal{C}, D)\}$. If $p(t) \leq L_{p}$ with $L_{p} = \frac{M_{p}}{C_{l}}$, then for constant $M_{p}$, we have,
$\left|\hat{Y}_{\mathscr{F}}(x_{1}) - \hat{Y}_{\mathscr{F}}(x_{2})\right| \leq M_{p}\left\|x_{1} - x_{2}\right\|_2, \forall x_{1}, x_{2}.
    $


\end{theorem}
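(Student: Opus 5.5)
We first write $\hat{Y}_{\mathscr{F}}(x)$ explicitly as a function of $l(x)$ only, and then use a one-dimensional Lipschitz bound followed by a chain-rule/mean-value bound from $l$ to $x$.

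\textbf{Step 1: closed form for $\hat{Y}_{\mathscr{F}}$.} From the best-response formula in Section~\ref{sec:notation}, the individual with feature $x$ is classified positively under $f(\cdot;t)$ exactly when $l(x) \geq t - \mathcal{C}$. If $t - \mathcal{C} \le l(x) < t$, they move to $t$; if $l(x)\ge t$, they are already accepted. Equivalently, this happens when $t \le l(x) + \mathcal{C}$. Hence
\begin{align*}
\hat{Y}_{\mathscr{F}}(x) = \int_{C+\mathcal{C}}^{D} \mathbf{1}[t \le l(x)+\mathcal{C}]\, p(t)\,\mathrm{d}t = \int_{C+\mathcal{C}}^{\min\{D,\, l(x)+\mathcal{C}\}} p(t)\,\mathrm{d}t =: G(l(x)),
\end{align*}
where the integral is read as zero when the upper limit is below the lower one. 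Because $p$ is a genuine density, $G$ is a truncated cumulative distribution function. It is nondecreasing and absolutely continuous, and it has no jumps because $p$ contains no impulse.

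\textbf{Step 2: Lipschitz property of $G$.} For $l_1 \le l_2$,
\begin{align*}
|G(l_2)-G(l_1)| = \int_{\max\{C+\mathcal{C},\, l_1+\mathcal{C}\}}^{\min\{D,\, l_2+\mathcal{C}\}} p(t)\,\mathrm{d}t \le L_p\,(l_2-l_1).
\end{align*}
This uses $p \le L_p$ and the fact that the integration interval has length at most $l_2 - l_1$, or is empty. So $G$ is $L_p$-Lipschitz on $\mathbb{R}$.

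\textbf{Step 3: transfer to $x$.} We bound $|l(x_1)-l(x_2)| \le C_l \|x_1-x_2\|_2$. Since $l$ is differentiable, the mean value theorem applied along the segment from $x_1$ to $x_2$ gives $l(x_1)-l(x_2) = \nabla l(\xi)^\top (x_1-x_2)$, and Cauchy--Schwarz then yields the bound. Combining,
\begin{align*}
|\hat{Y}_{\mathscr{F}}(x_1)-\hat{Y}_{\mathscr{F}}(x_2)| \le L_p C_l \|x_1-x_2\|_2 = M_p\|x_1-x_2\|_2 .
\end{align*}

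\textbf{Main obstacle.} Step 3 needs the segment between $x_1$ and $x_2$ to lie in the region where $C_l$ bounds the gradient. We will take $C_l$ as the supremum over all of $\mathbb{R}^d$, which is consistent with $l:\mathbb{R}^d\to\mathbb{R}$ in Assumption~\ref{assum:2}; then the bound holds for any pair of points. If instead $\mathcal{X}$ is non-convex, we would need to assume $l$ is $C_l$-Lipschitz directly.

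A second point needs care: individuals with $l(x)$ outside $[C,D]$, and the truncation of the integral at $C+\mathcal{C}$ and $D$. Step 2 already handles both, because truncation can only shrink the interval of integration.
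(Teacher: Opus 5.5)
Your proposal is correct and follows the same route as the paper's proof: write $\hat{Y}_{\mathscr{F}}(x)=\int_{C+\mathcal{C}}^{l(x)+\mathcal{C}}p(t)\,\mathrm{d}t$, bound the difference by $L_p\,|l(x_1)-l(x_2)|$, and pass to $\|x_1-x_2\|_2$ via the gradient bound $C_l$. You also make explicit two points the paper's proof leaves implicit: the truncation of the integral at $D$, and the requirement that $C_l$ bound $\nabla l$ along the entire segment from $x_1$ to $x_2$ in the mean-value step (or, failing that, a direct $C_l$-Lipschitz assumption on $l$).
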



Now we can see that the requirement for individual fairness w.r.t. outcome is very similar to individual fairness w.r.t. BRC. 
To find the optimal distribution of $p(t)$ for individual fairness w.r.t. outcome, we can solve a similar optimization problem to \eqref{opt:cost_1} by replacing $p(t) \leq L_{c}$ constraint with  $p(t) \leq L_{p}$. Our theorem also provides the possibility of achieving individual fairness w.r.t. BRC and outcome at the same time by using constraint $0 \leq p(t) \leq \min (L_{c}, L_{p})$.

\section{Extension to Group Fairness}\label{sec:group_fairness}

The analysis of our paper demonstrates the potential usage of randomized classifiers when pursuing individual fairness. In real-world applications, people are often concerned not only with individual fairness, but also with fairness across different groups \citep{caton2024fairness}. There are several methods used to achieve group fairness in the literature \cite{diana2024minimax, chan2024group, golrezaei2024online}. However, individual fairness and group fairness can conflict in some situations \citep{binns2020apparent}. This proposes the question that whether an individually fair randomized  classifier can  achieve group fairness. To answer this question, we extend the binary classification problem in Section~\ref{sec:notation} to multi-group case. The following definition and assumptions will be used in this section.

\begin{definition}[Multi-group $d$-dimensional binary classification problem]
    A $d$-dimensional binary classification problem is a problem of finding a classifier $f: \mathcal{X} \times \mathcal{A} \rightarrow \mathcal{Y}$ with $\mathcal{X} \subseteq \mathbb{R}^d$, where $\mathcal{A}$ is the domain of sensitive attribute which is often a discrete set. We denote the features from group $a \in \mathcal{A}$ as $X_{a}$.
\end{definition}    


\begin{assumption}\label{assum:5_2}
    In a multi-group $d$-dimensional binary classification problem, we assume there are functions $l_{a}(x)$ for each group such that
    \begin{align}
    l_{a}(x_{1})  \geq l_{a}(x_{2}) ~~~
     \text{iff} ~ \Pr\{Y = 1|X = x_{1}, A = a\}  \geq \Pr\{Y = 1|X = x_{2}, A = a\} \nonumber 
    \end{align}
    and 
    \begin{align}
     l_{a}(x_{1}) \leq l_{a}(x_{2}) ~~~
     \text{iff} ~ \Pr\{Y = 1|X = x_{1}, A = a\} \leq \Pr\{Y = 1|X = x_{2}, A = a\} \nonumber
    \end{align}
    We further assume that $l_{a}(x)$ are continuous differentiable functions and $l(X_{a}) \in [C_{a}, D_{a}]$.
\end{assumption}



For multi-group cases, we assume the cost function $g(\cdot)$ is different for each groups, which are denoted as $g_{a}(\cdot)$. For simplicity, we denote the constants $(g_{a})^{-1}(\frac{1}{\lambda})$ as $\mathcal{C}_{a}$. An important group fairness metric is statistical parity, which requires that the classifier has the same positive outcome rate over all groups, i.e.,
{\small
\begin{align}
    \Pr\{\mathscr{F}(X) = 1| A = a\} = \Pr\{\mathscr{F}(X) = 1|A = a'\}, ~~~ \forall a, a' \in \mathcal{A}. \nonumber
\end{align}}

In this part, our goal is to find one classifier for each group to ensure statistical parity. We use $p_{a}(t)$ to denote the distribution of threshold for group $A = a$. Denote the probability density function of $l_{a}(x)$ as $p_{L}^{a}(l)$. Given $p_{a}(t)$, the overall distribution of $\Pr\{\mathscr{F}(X) = 1|A = a\}$ is given by,
\begin{align}
    \Pr\{\mathscr{F}(X) = 1|A = a\} = \int_{C_{a} + \mathcal{C}_{a}}^{D_{a}}\left(\int_{t - \mathcal{C}}^{D_{a}}p_{L}^{a}(l)\mathrm{d}l\right)p_{a}(t)\mathrm{d}t.
\end{align}
Denote the group-specific error rate as $e_{a}(t)$ which is calculated as follows,
\begin{align}
    e_{a}(t) = \int_{C_{a}}^{t - \mathcal{C}_{a}}\Pr\{Y = 1|l_{a}(X) = l\}p_{L}^{a}(l)\mathrm{d}l + \int_{t - \mathcal{C}_{a}}^{D_{a}}\Pr\{Y = 0|l_{a}(X) = l\}p_{L}^{a}(l)\mathrm{d}l,
\end{align}
we can find optimal $p_{a}(t)$ by solving the optimization problem
\begin{align}\label{eq:OptSP}
    & p_{a}(t) = \arg \min_{p_{a}(t)} \sum_{a\in \mathcal{A}} \left[\int_{C_{a} + \mathcal{C}_{a}}^{D_{a}}e_{a}(t)p_{a}(t)\mathrm{d}t\right] \cdot \Pr\{A = a\}, \nonumber \\
    s.t. & \int_{C + \mathcal{C}_{a}}^{D_{a}}p_{a}(t) = 1, ~~~ 0 \leq p_{a}(t) \leq L, ~~~\forall a \in \mathcal{A}, \nonumber \\
    & \int_{C_{a} + \mathcal{C}_{a}}^{D_{a}}\left(\int_{t - \mathcal{C}_{a}}^{D_{a}}p_{L}^{a}(l)\mathrm{d}l\right)p_{a}(t)\mathrm{d}t = \int_{C_{a'} + \mathcal{C}_{a'}}^{D_{a'}}\left(\int_{t - \mathcal{C}_{a'}}^{D_{a'}}p_{L}^{a'}(l)\mathrm{d}l\right)p_{a'}(t)\mathrm{d}t, ~~~ \forall a, a' \in \mathcal{A}.
\end{align}
where $L$ could be $L_{c}$ or $L_{p}$ from Theorem~\ref{the:cost_fair_d} or Theorem~\ref{the:pred_fair_d}. The objective function is the weighted average of error rate across different groups. To solve this problem, we consider an approximated solution where all $p_{a}(t)$ are piecewise constant functions. Given a hyper-parameter $K$, we assume that $(C_{a} + \mathcal{C}_{a}, D_{a})$ can be divided into $K$ bins with the $k$-th bin corresponding to $(C_{a} + \mathcal{C}_{a} + \frac{(k - 1)(D_{a} - C_{a} - \mathcal{C}_{a})}{K}, C_{a} + \mathcal{C}_{a} + \frac{k(D_{a} - C_{a} - \mathcal{C}_{a})}{K}]$ and $p_{a}(t)$ has a constant value in each bin. We denote these constant values as $\{p_{a1}, p_{a2}, ..., p_{aK}\}$, and for simplicity, we denote the start point of the $k$-th bin as $s_{ak}$, i.e. $s_{ak} = C_{a} + \mathcal{C}_{a} + \frac{(k - 1)(D_{a} - C_{a} - \mathcal{C}_{a})}{K}$. The last  constraint in \eqref{eq:OptSP} can be re-written as follows,
{\small
\begin{align}\sum_{k=1}^{K}p_{ak}\int_{s_{ak}}^{s_{a(k+1)}}\left(\int_{t - \mathcal{C}_{a}}^{D_{a}}p_{a}(l)\mathrm{d}l\right)\mathrm{d}t = \sum_{k=1}^{K}p_{a'k}\int_{s_{a'k}}^{s_{a'(k+1)}}\left(\int_{t - \mathcal{C}_{a'}}^{D_{a'}}p_{a'}(l)\mathrm{d}l\right)\mathrm{d}t, ~~~\forall a, a' \in \mathcal{A}.
\end{align}
}
Define 
$
    B_{ak} = \int_{s_{ak}}^{s_{a(k+1)}}\left(\int_{t - \mathcal{C}_{a}}^{D_{a}}p_{L}^{a}(l)\mathrm{d}l\right),$
and similarly define $    A_{ak} = \int_{s_{ak}}^{s_{a(k + 1)}}\left(\int_{C_{a}}^{t - \mathcal{C}_{a}}\rho_{1}^{a}(l)\mathrm{d}l + \int_{t - \mathcal{C}_{a}}^{D_{a}}\rho_{0}^{a}(l)\mathrm{d}l\right)\mathrm{d}t,
$
where $\rho_{1}^{a}(l) = \Pr\{Y = 0|l_{a}(X) = l\}p_{L}^{a}(l)$, $\rho_{0}^{a}(l) = \Pr\{Y = 0|l_{a}(X) = l\}p_{L}^{a}(l)$. Then problem  \eqref{eq:OptSP} can be re-written as,
{
\begin{align}\label{eq:OptSP2}
    \min_{a\in\mathcal{A},~~ 0\leq k\leq K} \sum_{a \in \mathcal{A}}\sum_{k=1}^{K}A_{ak}p_{ak}\Pr\{A = a\}, ~~ s.t. & \sum_{k=1}^{K}\frac{(D_{a} - C_{a} - \mathcal{C}_{a})p_{ak}}{K} = 1, ~~ 0 \leq p_{ak} \leq L, \forall a \in \mathcal{A}, \nonumber \\
    & \sum_{k=1}^{K}B_{ak} p_{ak} = \sum_{k=1}^{K}B_{a'k}p_{a'k}, ~~\forall a, a' \in \mathcal{A}
\end{align}
}
In practice, the statistical parity constraint can be too strong such that the optimization problem has no solution. Therefore, it can be relaxed as $\left|\sum_{k=1}^{K}B_{ak} p_{ak} - \sum_{k=1}^{K}B_{a'k}p_{a'k}\right| \leq \Omega, \forall a, a' \in \mathcal{A}$. The value of $\Omega$ can be used to control the extent of statistical parity.

Apart from statistical parity, equal opportunity and equalized odds are also commonly used group fairness metrics. For prediction, equal opportunity \citep{hardt2016equality} requires that
{
\begin{align}
    \Pr\{\mathscr{F}(X) = 1| Y = 1, A = a\} = \Pr\{\mathscr{F}(X) = 1|Y = 1, A = a\} ~~~ 
    \forall a, a' \in \mathcal{A}. 
\end{align}
}
Equalized Odds \citep{romano2020achieving} implies that
\begin{align}
    \Pr\{\mathscr{F}(X) = 1|Y = y, A = a\} = \Pr\{\mathscr{F}(X) = 1|Y = y, A = a'\} ~~~
    \forall a, a' \in \mathcal{A}, y \in \mathcal{Y}. 
\end{align}
The optimization problem \eqref{eq:OptSP2} can be slightly modified to satisfy equal opportunity or equalized odds. 

\section{Experiment}\label{sec:experiment}
\textbf{Dataset and Implementation.}
In this section, we conduct experiments on two real-world datasets to evaluate the effectiveness of our proposed methods. The first dataset is the FICO dataset, pre-processed by \cite{hardt2016equality}. This dataset provides the cumulative distribution of credit scores across different racial groups. In our experiments, we generate 5,000 data points according to the distributions for the racial groups \textit{Black} and \textit{Non-Hispanic White}. Each feature vector is defined as $x := [\kappa, a]$, where $\kappa$ denotes the credit score and $a$ represents the individual's race. We assume $l(x) = \kappa$, and define the individual utility function as $\mathscr{U}_{indiv}(f) = [f(x') - f(x)] - 100(l(x') - l(x))^{2}$, with the associated cost function given by $c(x, x') = 100(l(x') - l(x))^{2}$.

The second dataset is the Law School Dataset \cite{wightman1998lsac}, which contains 18,692 student records. We use the same pre-processed version employed in the experiments of \cite{kusner2017counterfactual}. The prediction task is to determine whether a student will pass the bar exam. In our experiment, we include all features excluding \textit{zgpa} and the target variable \textit{bar\_pass} in the feature vector $x$. The \textit{zgpa} variable, representing a student’s final law school GPA, is used as $l(x)$.\footnote{Figure~\ref{fig:score_law_school} in Appendix~\ref{appendix:law_school} shows that the estimated $\Pr\{\textit{bar\_pass} = 1|\textit{zgpa}\}$ satisfies Assumption~\ref{assum:2}. A brief description of each attribute is provided in Appendix~\ref{appendix:law_school}.} To estimate $l(x)$ in practice, we train a linear regression model using the remaining features $x$ to predict zgpa. When evaluating group fairness, we consider race as the sensitive attribute. We define the individual utility function as $\mathscr{U}_{indiv}(f) = [f(x') - f(x)] - (l(x') -l(x))^{2}$, with the corresponding cost function $c(x, x') = (l(x') - l(x))^{2}$.

We split the dataset into train/validation/test datasets randomly with a ratio of $60\%/20/\%20$. The baseline model for individual fairness experiment (Tables~\ref{tab:exp_fico_if_p} and \ref{tab:exp_law_if_p}) is the best deterministic classifier. For the group fairness experiments (Tables~\ref{tab:exp_fico_sp_p} and \ref{tab:exp_law_sp_p}), the baseline model consists of two deterministic thresholds (one for each group). We find the deterministic thresholds using grid search on all possible combinations which will maximize the macro average F1 score while satisfying group fairness constraints. For the Law School dataset, we set the number of bins $K$ to $80$ and set it to $200$ for FICO dataset. Apart from using F1 score to evaluate the accuracy, we use IF ratio for evaluating individual fairness, Statistical Disparity (S-DP), Equal Opportunity Disparity (EO-DP) and Equalized Odds Disparity (ED-DP) for evaluating group fairness. Assume the dataset is $\{x_{i}, y_{i}\}_{i=1}^{N}$ (sensitive attribute $a_{i}$ is included in $x_{i}$), IF ratio is defined as $\max_{i \neq j, i, j \in \{1, ..., N\}}\frac{|\gamma_{i} - \gamma_{j}|}{\left\|x_{i} - x_{j}\right\|_{2}}$. For the deterministic classifier, $\gamma_{i}$ is the predicted outcome or the BRC for data $x_{i}$. For the randomized classifier, $\gamma_{i}$ is the expected outcome or expected BRC. S-DP is $\left|\Pr\{\hat{y}_{i} = 1|a_{i} = 1\} - \Pr\{\hat{y}_{i} = 1|a_{i} = 0\}\right|$. EO-DP is defined as $\left|\Pr\{\hat{y}_{i} = 1|a_{i} = 1, y_{i} = 1\} - \Pr\{\hat{y}_{i} = 1|a_{i} = 0, y_{i} = 1\}\right|$, where the probability is estimated as the ratio in the dataset, and ED-DP is the maximum conditional disparity across both positive and negative classes  $\max \{\left|\Pr\{\hat{y}_{i} = 1|a_{i} = 1, y_{i} = 1\} - \Pr\{\hat{y}_{i} = 1|a_{i} = 0, y_{i} = 1\}\right|,$ $\left|\Pr\{\hat{y}_{i} = 1|a_{i} = 1, y_{i} = 0\} - \Pr\{\hat{y}_{i} = 1|a_{i} = 0, y_{i} = 0\}\right|\}$.

\textbf{Results.}
Table~\ref{tab:exp_fico_if_p} and Table~\ref{tab:exp_law_if_p} show the results when using randomized classifiers for the two datasets with different upper bounds on the distributions. IF ratio is measured w.r.t. the prediction \footnote{In Appendix~\ref{append:additional_results}, we displays the results for BRC.}.
\begin{table}[htbp]
    \centering
    \caption{Results on FICO dataset applying our randomized classifier}\label{tab:exp_fico_if_p}
    \resizebox{1.0\textwidth}{!}{
        \begin{tabular}{@{}cccccc@{}}
            \toprule
            \multicolumn{1}{c}{\multirow{2}{*}{Method}} & \multicolumn{1}{c}{\multirow{2}{*}{Deterministic Classifier}} & \multicolumn{4}{c}{Randomized Classifier} \\
            &  & $L_{p} = 1$ & $L_{p} = 0.5$ & $L_{p} = 0.25$ & $L_{p} = 0.1$ \\
            \midrule
            F1 score & $0.986 \pm 0.001$ & $0.986 \pm 0.001$ & $0.986 \pm 0.001$ & $0.984 \pm 0.002$ & $0.975 \pm 0.003$ \\
            IF ratio &  $4.421 \pm 0.691$ & $1.000 \pm 0.000$ & $0.500 \pm 0.000$ & $0.250 \pm 0.000$ & $0.100 \pm 0.000$ \\
            S-DP & $0.355 \pm 0.020$ & $0.356 \pm 0.019$ & $0.356 \pm 0.018$ & $0.352 \pm 0.017$ & $0.355 \pm 0.015$ \\
            EO-DP & $0.016 \pm 0.011$ & $0.016 \pm 0.010$ & $0.019 \pm 0.010$ & $0.016 \pm 0.006$ & $0.026 \pm 0.011$ \\
            ED-DP & $0.016 \pm 0.011$ & $0.016 \pm 0.010$ & $0.019 \pm 0.010$ & $0.016 \pm 0.006$ & $0.026 \pm 0.011$ \\
            \bottomrule
        \end{tabular}
    }
\end{table}
\begin{table}[htbp]
    \centering
    \caption{Results on Law School dataset applying our randomized classifier}\label{tab:exp_law_if_p}
    \resizebox{1.0\textwidth}{!}{
        \begin{tabular}{@{}cccccc@{}}
            \toprule
            \multicolumn{1}{c}{\multirow{2}{*}{Method}} & \multicolumn{1}{c}{\multirow{2}{*}{Deterministic Classifier}} & \multicolumn{4}{c}{Randomized Classifier} \\
            &  & $L_{p} = 1$ & $L_{p} = 0.8$ & $L_{p} = 0.4$ & $L_{p} = 0.3$ \\
            \midrule
            F1 score & $0.680 \pm 0.013$ & $0.587 \pm 0.010$ & $0.585 \pm 0.012$ & $0.545 \pm 0.011$ & $0.545 \pm 0.011$ \\
            IF ratio & $3.164 \pm 1.671$ & $0.561 \pm 0.008$ & $0.449 \pm 0.007$ & $0.224 \pm 0.003$ & $0.175 \pm 0.005$ \\
            S-DP & $0.443 \pm 0.012$ & $0.476 \pm 0.019$ & $0.470 \pm 0.015$ & $0.399 \pm 0.019$ & $0.350 \pm 0.014$ \\
            EO-DP & $0.350 \pm 0.028$ & $0.433 \pm 0.041$ & $0.431 \pm 0.034$ & $0.357 \pm 0.040$ & $0.309 \pm 0.041$ \\
            ED-DP disparity & $0.362 \pm 0.021$ & $0.433 \pm 0.041$ & $0.431 \pm 0.034$ & $0.357 \pm 0.040$ & $0.309 \pm 0.041$ \\
            \bottomrule
        \end{tabular}
    }
\end{table}
For the FICO dataset, we can improve individual fairness for a large extent (4.421 to 0.100) with only a small accuracy drop. The IF ratio is equals to the upper bounds $L_{p}$, which is consistent to Theorem~\ref{the:pred_fair_d}. For the Law School dataset, we observe that with larger $L_{p}$, we have a smaller IF ratio with smaller F1 score. IF ratio is not equal to $L_{p}$ because of the gradient introduced by the linear regression model as discussed in Theorem~\ref{the:pred_fair_d}. We also measure the group fairness metrics, which shows that there is no direct relationship between individual fairness and group fairness. Therefore, it is possible that we improve individual fairness with a randomized classifier while not exacerbating group unfairness.

While fixing $L_{p} = 1$, we validate the effectiveness of our statistical parity constraint in Section~\ref{sec:group_fairness} by setting $\Omega$ as different values. From Table~\ref{tab:exp_fico_sp_p} and \ref{tab:exp_law_sp_p}, the randomized classifiers can have better statistical parity as we decrease $\Omega$. At the same time, we get a lower IF ratio compared to the baseline, which means we improve the group fairness and individual fairness simultaneously. Again, the results show no clear correlation between individual fairness and statistical parity.
\begin{table}[htbp]
    \centering
    \caption{Results on FICO dataset applying our randomized classifier with statistical parity constraint.}\label{tab:exp_fico_sp_p}
    \resizebox{1.0\textwidth}{!}{
        \begin{tabular}{@{}cccccc@{}}
            \toprule
                \multicolumn{1}{c}{\multirow{2}{*}{Method}} & \multicolumn{1}{c}{\multirow{2}{*}{Deterministic Classifier}} & \multicolumn{4}{c}{Randomized Classifier} \\
                & & $\Omega = 0.1$ & $\Omega = 0.08$ & $\Omega = 0.06$ & $\Omega = 0.04$ \\
                \midrule
                F1 score & $0.857 \pm 0.006$ & $0.818 \pm 0.006$ & $0.800 \pm 0.008$ & $0.779 \pm 0.008$ & $0.756 \pm 0.009$ \\
                IF ratio & $21.58 \pm 25.87$ & $1.000 \pm 0.000$ & $1.000 \pm 0.000$ & $1.000 \pm 0.000$ & $1.000 \pm 0.000$ \\
                S-DP & $0.115 \pm 0.022$ & $0.114 \pm 0.018$ & $0.094 \pm 0.022$ & $0.072 \pm 0.019$ & $0.050 \pm 0.015$ \\
                \bottomrule
        \end{tabular}
    }
\end{table}

\begin{table}[htbp]
    \centering
    \caption{Results on Law School dataset applying our randomized classifier with statistical parity constraint.}\label{tab:exp_law_sp_p}
    \resizebox{1.0\textwidth}{!}{
        \begin{tabular}{@{}cccccc@{}}
            \toprule
            \multicolumn{1}{c}{\multirow{2}{*}{Method}} & \multicolumn{1}{c}{\multirow{2}{*}{Deterministic Classifier}} & \multicolumn{4}{c}{Randomized Classifier} \\
            & & $\Omega = 0.1$ & $\Omega = 0.08$ & $\Omega = 0.06$ & $\Omega = 0.04$ \\
            \midrule
            F1 score & $0.642 \pm 0.016$ & $0.554 \pm 0.016$ & $0.562 \pm 0.020$ & $0.567 \pm 0.017$ & $0.575 \pm 0.013$ \\
            IF ratio & $12.15 \pm 18.11$ & $0.655 \pm 0.065$ & $0.673 \pm 0.056$ & $0.681 \pm 0.055$ & $0.689 \pm 0.062$ \\
            S-DP & $0.047 \pm 0.038$ & $0.039 \pm 0.018$ & $0.036 \pm 0.017$ & $0.031 \pm 0.016$ & $0.026 \pm 0.014$ \\
            \bottomrule
        \end{tabular}
    }
    \label{tab:sp_law}
\end{table}

\section{Conclusion}
In this paper, we prove the existence of unfairness in strategic classification when using deterministic threshold-based classifiers. 
To address this issue, we propose a novel approach that employs randomized thresholds, ensuring fairness through an optimizable threshold distribution. We show that this distribution can be efficiently determined using linear programming. Furthermore, by appropriately constraining the threshold distribution, we can achieve both individual and group fairness simultaneously maintaining computational efficiency through linear programming.

\section{Limitations}\label{sec:limitation}
While our randomized classifiers offer significant advantages, it is important to note that the proposed method is applicable only when the classification problem can be effectively addressed using a threshold-based classifier. A key assumption is the existence of a function $l(x)$ such that the cost depends solely on the distance between $l(x_{1})$ and $l(x_{2})$.  If this assumption is violated, the method may no longer be valid. Additionally, when the dataset is too small, estimation errors can negatively affect performance. The time complexity of solving a linear optimization problem with $K$ variables using the simplex algorithm can be exponential, i.e. $\mathcal{O}(2^{K})$ \cite{fearnley2015complexity}, which may limit the number of bins that can be used in practice.
\section{Acknowledgment}
This work is supported by the U.S. National Science Foundation under award IIS-2301599,
CMMI-2301601, and DMS-2529302 and  by grants from the Ohio State
University’s Translational Data Analytics Institute
and College of Engineering Strategic Research Initiative.

\bibliographystyle{unsrt}
\bibliography{main}



\newpage
\section*{NeurIPS Paper Checklist}

The checklist is designed to encourage best practices for responsible machine learning research, addressing issues of reproducibility, transparency, research ethics, and societal impact. Do not remove the checklist: {\bf The papers not including the checklist will be desk rejected.} The checklist should follow the references and follow the (optional) supplemental material.  The checklist does NOT count towards the page
limit. 

Please read the checklist guidelines carefully for information on how to answer these questions. For each question in the checklist:
\begin{itemize}
    \item You should answer \answerYes{}, \answerNo{}, or \answerNA{}.
    \item \answerNA{} means either that the question is Not Applicable for that particular paper or the relevant information is Not Available.
    \item Please provide a short (1–2 sentence) justification right after your answer (even for NA). 
\end{itemize}

{\bf The checklist answers are an integral part of your paper submission.} They are visible to the reviewers, area chairs, senior area chairs, and ethics reviewers. You will be asked to also include it (after eventual revisions) with the final version of your paper, and its final version will be published with the paper.

The reviewers of your paper will be asked to use the checklist as one of the factors in their evaluation. While "\answerYes{}" is generally preferable to "\answerNo{}", it is perfectly acceptable to answer "\answerNo{}" provided a proper justification is given (e.g., "error bars are not reported because it would be too computationally expensive" or "we were unable to find the license for the dataset we used"). In general, answering "\answerNo{}" or "\answerNA{}" is not grounds for rejection. While the questions are phrased in a binary way, we acknowledge that the true answer is often more nuanced, so please just use your best judgment and write a justification to elaborate. All supporting evidence can appear either in the main paper or the supplemental material, provided in appendix. If you answer \answerYes{} to a question, in the justification please point to the section(s) where related material for the question can be found.

IMPORTANT, please:
\begin{itemize}
    \item {\bf Delete this instruction block, but keep the section heading ``NeurIPS Paper Checklist"},
    \item  {\bf Keep the checklist subsection headings, questions/answers and guidelines below.}
    \item {\bf Do not modify the questions and only use the provided macros for your answers}.
\end{itemize}


\begin{enumerate}

\item {\bf Claims}
    \item[] Question: Do the main claims made in the abstract and introduction accurately reflect the paper's contributions and scope?
    \item[] Answer: \answerYes{} 
    \item[] Justification: 
    We have two main claims in the abstract and introduction: using randomized classifier to satisfy individual fairness and extending it to group fairness notions. The former claim is addressed in Section~\ref{sec:if_cost} and Section~\ref{sec:prediction} and the latter one is discussed in Section~\ref{sec:group_fairness}.
    \item[] Guidelines:
    \begin{itemize}
        \item The answer NA means that the abstract and introduction do not include the claims made in the paper.
        \item The abstract and/or introduction should clearly state the claims made, including the contributions made in the paper and important assumptions and limitations. A No or NA answer to this question will not be perceived well by the reviewers. 
        \item The claims made should match theoretical and experimental results, and reflect how much the results can be expected to generalize to other settings. 
        \item It is fine to include aspirational goals as motivation as long as it is clear that these goals are not attained by the paper. 
    \end{itemize}

\item {\bf Limitations}
    \item[] Question: Does the paper discuss the limitations of the work performed by the authors?
    \item[] Answer: \answerYes{} 
    \item[] Justification: We discuss the limitations of our method in Section~\ref{sec:limitation}.
    \item[] Guidelines:
    \begin{itemize}
        \item The answer NA means that the paper has no limitation while the answer No means that the paper has limitations, but those are not discussed in the paper. 
        \item The authors are encouraged to create a separate "Limitations" section in their paper.
        \item The paper should point out any strong assumptions and how robust the results are to violations of these assumptions (e.g., independence assumptions, noiseless settings, model well-specification, asymptotic approximations only holding locally). The authors should reflect on how these assumptions might be violated in practice and what the implications would be.
        \item The authors should reflect on the scope of the claims made, e.g., if the approach was only tested on a few datasets or with a few runs. In general, empirical results often depend on implicit assumptions, which should be articulated.
        \item The authors should reflect on the factors that influence the performance of the approach. For example, a facial recognition algorithm may perform poorly when image resolution is low or images are taken in low lighting. Or a speech-to-text system might not be used reliably to provide closed captions for online lectures because it fails to handle technical jargon.
        \item The authors should discuss the computational efficiency of the proposed algorithms and how they scale with dataset size.
        \item If applicable, the authors should discuss possible limitations of their approach to address problems of privacy and fairness.
        \item While the authors might fear that complete honesty about limitations might be used by reviewers as grounds for rejection, a worse outcome might be that reviewers discover limitations that aren't acknowledged in the paper. The authors should use their best judgment and recognize that individual actions in favor of transparency play an important role in developing norms that preserve the integrity of the community. Reviewers will be specifically instructed to not penalize honesty concerning limitations.
    \end{itemize}

\item {\bf Theory assumptions and proofs}
    \item[] Question: For each theoretical result, does the paper provide the full set of assumptions and a complete (and correct) proof?
    \item[] Answer: \answerYes{} 
    \item[] Justification: General assumptions needed for the theorems are explicitly stated in Assumption~\ref{assum:2} and \ref{assum:5_2}. The additional assumptions used by each theorem are included in the theorem itself. The proofs for the theorems are provided in Appendix~\ref{append:proofs}.
    \item[] Guidelines:
    \begin{itemize}
        \item The answer NA means that the paper does not include theoretical results. 
        \item All the theorems, formulas, and proofs in the paper should be numbered and cross-referenced.
        \item All assumptions should be clearly stated or referenced in the statement of any theorems.
        \item The proofs can either appear in the main paper or the supplemental material, but if they appear in the supplemental material, the authors are encouraged to provide a short proof sketch to provide intuition. 
        \item Inversely, any informal proof provided in the core of the paper should be complemented by formal proofs provided in appendix or supplemental material.
        \item Theorems and Lemmas that the proof relies upon should be properly referenced. 
    \end{itemize}

    \item {\bf Experimental result reproducibility}
    \item[] Question: Does the paper fully disclose all the information needed to reproduce the main experimental results of the paper to the extent that it affects the main claims and/or conclusions of the paper (regardless of whether the code and data are provided or not)?
    \item[] Answer: \answerYes{} 
    \item[] Justification: We presents the details of experiment implementation in Section~\ref{sec:experiment} and Appendix~\ref{append:implementation} for reproducing our results. We would also release the code and data used in the experiments to enhance reproducibility.
    \item[] Guidelines:
    \begin{itemize}
        \item The answer NA means that the paper does not include experiments.
        \item If the paper includes experiments, a No answer to this question will not be perceived well by the reviewers: Making the paper reproducible is important, regardless of whether the code and data are provided or not.
        \item If the contribution is a dataset and/or model, the authors should describe the steps taken to make their results reproducible or verifiable. 
        \item Depending on the contribution, reproducibility can be accomplished in various ways. For example, if the contribution is a novel architecture, describing the architecture fully might suffice, or if the contribution is a specific model and empirical evaluation, it may be necessary to either make it possible for others to replicate the model with the same dataset, or provide access to the model. In general. releasing code and data is often one good way to accomplish this, but reproducibility can also be provided via detailed instructions for how to replicate the results, access to a hosted model (e.g., in the case of a large language model), releasing of a model checkpoint, or other means that are appropriate to the research performed.
        \item While NeurIPS does not require releasing code, the conference does require all submissions to provide some reasonable avenue for reproducibility, which may depend on the nature of the contribution. For example
        \begin{enumerate}
            \item If the contribution is primarily a new algorithm, the paper should make it clear how to reproduce that algorithm.
            \item If the contribution is primarily a new model architecture, the paper should describe the architecture clearly and fully.
            \item If the contribution is a new model (e.g., a large language model), then there should either be a way to access this model for reproducing the results or a way to reproduce the model (e.g., with an open-source dataset or instructions for how to construct the dataset).
            \item We recognize that reproducibility may be tricky in some cases, in which case authors are welcome to describe the particular way they provide for reproducibility. In the case of closed-source models, it may be that access to the model is limited in some way (e.g., to registered users), but it should be possible for other researchers to have some path to reproducing or verifying the results.
        \end{enumerate}
    \end{itemize}

\item {\bf Open access to data and code}
    \item[] Question: Does the paper provide open access to the data and code, with sufficient instructions to faithfully reproduce the main experimental results, as described in supplemental material?
    \item[] Answer: \answerYes{} 
    \item[] Justification: We submit the code and dataset we used to produce the experiment results with our paper.
    \item[] Guidelines:
    \begin{itemize}
        \item The answer NA means that paper does not include experiments requiring code.
        \item Please see the NeurIPS code and data submission guidelines (\url{https://nips.cc/public/guides/CodeSubmissionPolicy}) for more details.
        \item While we encourage the release of code and data, we understand that this might not be possible, so “No” is an acceptable answer. Papers cannot be rejected simply for not including code, unless this is central to the contribution (e.g., for a new open-source benchmark).
        \item The instructions should contain the exact command and environment needed to run to reproduce the results. See the NeurIPS code and data submission guidelines (\url{https://nips.cc/public/guides/CodeSubmissionPolicy}) for more details.
        \item The authors should provide instructions on data access and preparation, including how to access the raw data, preprocessed data, intermediate data, and generated data, etc.
        \item The authors should provide scripts to reproduce all experimental results for the new proposed method and baselines. If only a subset of experiments are reproducible, they should state which ones are omitted from the script and why.
        \item At submission time, to preserve anonymity, the authors should release anonymized versions (if applicable).
        \item Providing as much information as possible in supplemental material (appended to the paper) is recommended, but including URLs to data and code is permitted.
    \end{itemize}

\item {\bf Experimental setting/details}
    \item[] Question: Does the paper specify all the training and test details (e.g., data splits, hyperparameters, how they were chosen, type of optimizer, etc.) necessary to understand the results?
    \item[] Answer: \answerYes{} 
    \item[] Justification: We present all the important setting in Section~\ref{sec:experiment} and Appendix~\ref{append:implementation}. In the supplemental material, we include the code.
    \item[] Guidelines:
    \begin{itemize}
        \item The answer NA means that the paper does not include experiments.
        \item The experimental setting should be presented in the core of the paper to a level of detail that is necessary to appreciate the results and make sense of them.
        \item The full details can be provided either with the code, in appendix, or as supplemental material.
    \end{itemize}

\item {\bf Experiment statistical significance}
    \item[] Question: Does the paper report error bars suitably and correctly defined or other appropriate information about the statistical significance of the experiments?
    \item[] Answer: \answerYes{} 
    \item[] Justification: For all the numbers in the results, we report the variance when changing different random seeds for data split and sampling.
    \item[] Guidelines:
    \begin{itemize}
        \item The answer NA means that the paper does not include experiments.
        \item The authors should answer "Yes" if the results are accompanied by error bars, confidence intervals, or statistical significance tests, at least for the experiments that support the main claims of the paper.
        \item The factors of variability that the error bars are capturing should be clearly stated (for example, train/test split, initialization, random drawing of some parameter, or overall run with given experimental conditions).
        \item The method for calculating the error bars should be explained (closed form formula, call to a library function, bootstrap, etc.)
        \item The assumptions made should be given (e.g., Normally distributed errors).
        \item It should be clear whether the error bar is the standard deviation or the standard error of the mean.
        \item It is OK to report 1-sigma error bars, but one should state it. The authors should preferably report a 2-sigma error bar than state that they have a 96\% CI, if the hypothesis of Normality of errors is not verified.
        \item For asymmetric distributions, the authors should be careful not to show in tables or figures symmetric error bars that would yield results that are out of range (e.g. negative error rates).
        \item If error bars are reported in tables or plots, The authors should explain in the text how they were calculated and reference the corresponding figures or tables in the text.
    \end{itemize}

\item {\bf Experiments compute resources}
    \item[] Question: For each experiment, does the paper provide sufficient information on the computer resources (type of compute workers, memory, time of execution) needed to reproduce the experiments?
    \item[] Answer: \answerYes{} 
    \item[] Justification: We present the information about this in Appendix~\ref{append:resources}.
    \item[] Guidelines:
    \begin{itemize}
        \item The answer NA means that the paper does not include experiments.
        \item The paper should indicate the type of compute workers CPU or GPU, internal cluster, or cloud provider, including relevant memory and storage.
        \item The paper should provide the amount of compute required for each of the individual experimental runs as well as estimate the total compute. 
        \item The paper should disclose whether the full research project required more compute than the experiments reported in the paper (e.g., preliminary or failed experiments that didn't make it into the paper). 
    \end{itemize}
    
\item {\bf Code of ethics}
    \item[] Question: Does the research conducted in the paper conform, in every respect, with the NeurIPS Code of Ethics \url{https://neurips.cc/public/EthicsGuidelines}?
    \item[] Answer: \answerYes{} 
    \item[] Justification: Our paper does not involve human participants. The datasets we used are all publicly available and do not include privacy information.
    \item[] Guidelines:
    \begin{itemize}
        \item The answer NA means that the authors have not reviewed the NeurIPS Code of Ethics.
        \item If the authors answer No, they should explain the special circumstances that require a deviation from the Code of Ethics.
        \item The authors should make sure to preserve anonymity (e.g., if there is a special consideration due to laws or regulations in their jurisdiction).
    \end{itemize}

\item {\bf Broader impacts}
    \item[] Question: Does the paper discuss both potential positive societal impacts and negative societal impacts of the work performed?
    \item[] Answer: \answerYes{} 
    \item[] Justification: We discussed in Section~\ref{sec:limitation} that when our method is applied when the assumptions not hold, it could have negative effect on fairness.
    \item[] Guidelines:
    \begin{itemize}
        \item The answer NA means that there is no societal impact of the work performed.
        \item If the authors answer NA or No, they should explain why their work has no societal impact or why the paper does not address societal impact.
        \item Examples of negative societal impacts include potential malicious or unintended uses (e.g., disinformation, generating fake profiles, surveillance), fairness considerations (e.g., deployment of technologies that could make decisions that unfairly impact specific groups), privacy considerations, and security considerations.
        \item The conference expects that many papers will be foundational research and not tied to particular applications, let alone deployments. However, if there is a direct path to any negative applications, the authors should point it out. For example, it is legitimate to point out that an improvement in the quality of generative models could be used to generate deepfakes for disinformation. On the other hand, it is not needed to point out that a generic algorithm for optimizing neural networks could enable people to train models that generate Deepfakes faster.
        \item The authors should consider possible harms that could arise when the technology is being used as intended and functioning correctly, harms that could arise when the technology is being used as intended but gives incorrect results, and harms following from (intentional or unintentional) misuse of the technology.
        \item If there are negative societal impacts, the authors could also discuss possible mitigation strategies (e.g., gated release of models, providing defenses in addition to attacks, mechanisms for monitoring misuse, mechanisms to monitor how a system learns from feedback over time, improving the efficiency and accessibility of ML).
    \end{itemize}
    
\item {\bf Safeguards}
    \item[] Question: Does the paper describe safeguards that have been put in place for responsible release of data or models that have a high risk for misuse (e.g., pretrained language models, image generators, or scraped datasets)?
    \item[] Answer: \answerNA{} 
    \item[] Justification: Our paper does not release any new model or dataset.
    \item[] Guidelines:
    \begin{itemize}
        \item The answer NA means that the paper poses no such risks.
        \item Released models that have a high risk for misuse or dual-use should be released with necessary safeguards to allow for controlled use of the model, for example by requiring that users adhere to usage guidelines or restrictions to access the model or implementing safety filters. 
        \item Datasets that have been scraped from the Internet could pose safety risks. The authors should describe how they avoided releasing unsafe images.
        \item We recognize that providing effective safeguards is challenging, and many papers do not require this, but we encourage authors to take this into account and make a best faith effort.
    \end{itemize}

\item {\bf Licenses for existing assets}
    \item[] Question: Are the creators or original owners of assets (e.g., code, data, models), used in the paper, properly credited and are the license and terms of use explicitly mentioned and properly respected?
    \item[] Answer: \answerYes{} 
    \item[] Justification: The two dataset used in our experiments are well cited in Section~\ref{sec:experiment}.
    \item[] Guidelines:
    \begin{itemize}
        \item The answer NA means that the paper does not use existing assets.
        \item The authors should cite the original paper that produced the code package or dataset.
        \item The authors should state which version of the asset is used and, if possible, include a URL.
        \item The name of the license (e.g., CC-BY 4.0) should be included for each asset.
        \item For scraped data from a particular source (e.g., website), the copyright and terms of service of that source should be provided.
        \item If assets are released, the license, copyright information, and terms of use in the package should be provided. For popular datasets, \url{paperswithcode.com/datasets} has curated licenses for some datasets. Their licensing guide can help determine the license of a dataset.
        \item For existing datasets that are re-packaged, both the original license and the license of the derived asset (if it has changed) should be provided.
        \item If this information is not available online, the authors are encouraged to reach out to the asset's creators.
    \end{itemize}

\item {\bf New assets}
    \item[] Question: Are new assets introduced in the paper well documented and is the documentation provided alongside the assets?
    \item[] Answer: \answerNA{} 
    \item[] Justification: The paper does not release any new assets.
    \item[] Guidelines:
    \begin{itemize}
        \item The answer NA means that the paper does not release new assets.
        \item Researchers should communicate the details of the dataset/code/model as part of their submissions via structured templates. This includes details about training, license, limitations, etc. 
        \item The paper should discuss whether and how consent was obtained from people whose asset is used.
        \item At submission time, remember to anonymize your assets (if applicable). You can either create an anonymized URL or include an anonymized zip file.
    \end{itemize}

\item {\bf Crowdsourcing and research with human subjects}
    \item[] Question: For crowdsourcing experiments and research with human subjects, does the paper include the full text of instructions given to participants and screenshots, if applicable, as well as details about compensation (if any)? 
    \item[] Answer: \answerNA{} 
    \item[] Justification: The paper does not involve crowdsourcing nor research with human subjects.
    \item[] Guidelines:
    \begin{itemize}
        \item The answer NA means that the paper does not involve crowdsourcing nor research with human subjects.
        \item Including this information in the supplemental material is fine, but if the main contribution of the paper involves human subjects, then as much detail as possible should be included in the main paper. 
        \item According to the NeurIPS Code of Ethics, workers involved in data collection, curation, or other labor should be paid at least the minimum wage in the country of the data collector. 
    \end{itemize}

\item {\bf Institutional review board (IRB) approvals or equivalent for research with human subjects}
    \item[] Question: Does the paper describe potential risks incurred by study participants, whether such risks were disclosed to the subjects, and whether Institutional Review Board (IRB) approvals (or an equivalent approval/review based on the requirements of your country or institution) were obtained?
    \item[] Answer: \answerNA{} 
    \item[] Justification: No human subjects are involved in the research.
    \item[] Guidelines:
    \begin{itemize}
        \item The answer NA means that the paper does not involve crowdsourcing nor research with human subjects.
        \item Depending on the country in which research is conducted, IRB approval (or equivalent) may be required for any human subjects research. If you obtained IRB approval, you should clearly state this in the paper. 
        \item We recognize that the procedures for this may vary significantly between institutions and locations, and we expect authors to adhere to the NeurIPS Code of Ethics and the guidelines for their institution. 
        \item For initial submissions, do not include any information that would break anonymity (if applicable), such as the institution conducting the review.
    \end{itemize}

\item {\bf Declaration of LLM usage}
    \item[] Question: Does the paper describe the usage of LLMs if it is an important, original, or non-standard component of the core methods in this research? Note that if the LLM is used only for writing, editing, or formatting purposes and does not impact the core methodology, scientific rigorousness, or originality of the research, declaration is not required.
    \item[] Answer: \answerNA{} 
    \item[] Justification: LLM is only used for grammar and spelling checking while writing the paper.
    \item[] Guidelines:
    \begin{itemize}
        \item The answer NA means that the core method development in this research does not involve LLMs as any important, original, or non-standard components.
        \item Please refer to our LLM policy (\url{https://neurips.cc/Conferences/2025/LLM}) for what should or should not be described.
    \end{itemize}

\end{enumerate}

\newpage
\appendix
\section{Proofs}\label{append:proofs}
\subsection{Proof for Theorem~\ref{the:det_no_fair_1}}\label{proof_det_no_fair_1}
\begin{proof}
    When $t_{0} \in (C + \mathcal{C}, D)$, we can find $x_{1}$ such that $l(x_{1}) = t_{0} - \mathcal{C}$ in $\mathcal{X}$. From the definition of BRC, we have $c_{f}(x_{1}) = \frac{1}{\lambda}$. For any positive constant $\epsilon$, we know that for $x_{2}$ satisfying $l(x_{2}) = t_{0} - \mathcal{C} - \epsilon$, $c_{f}(x_{2}) = 0$. Because $l(x)$ satisfies
    \begin{align}
    \left|l(x_{1}) - l(x_{2})\right| \geq L_{l}\left\|x_{1} - x_{2}\right\|_{2},
    \end{align}
    we have
    \begin{align}
    \left|c_{f}(x_{1}) - c_{f}(x_{2})\right| = \frac{1}{\lambda},
    \end{align}
    \begin{align}
    \left\|x_{1} - x_{2}\right\|_{2} \leq \frac{\epsilon}{L_{l}}.
    \end{align}
    When $\epsilon < \frac{L_{l}}{M_{c}\lambda}$, we have $\left|c_{f}(x_{1}) - c_{f}(x_{2})\right| > M_{c}\left\|x_{1} - x_{2}\right\|_{2}$.
\end{proof}

\subsection{Proof for Theorem~\ref{the:cost_fair_d}}\label{proof_cost_fair_d}
\begin{proof}
    For any input $x$, the individual cost is
    \begin{align}
        c_{\mathscr{F}}(x) = \int_{l(x)}^{l(x) + \mathcal{C}}g(t - l(x))p(t)\mathrm{d}t.
    \end{align}
    The gradient of $c_{\mathscr{F}}(x)$ is
    \begin{align}
        \nabla_{x}c_{\mathscr{F}}(x) = \nabla_{x}l(x)\left[g(\mathcal{C})p\left(l(x) + \mathcal{C}\right) - g(0)p(l(x)) - \int_{l(x)}^{l(x) + \mathcal{C}}g'(t - l(x))p(t)\mathrm{d}t\right].
    \end{align}
    From the assumptions that $g(0) = 0$ and $g(\mathcal{C}) = \frac{1}{\lambda}$, we have
    \begin{align}
        \left\|\nabla_{x}c_{\mathscr{F}}(x)\right\|_{2} = \left\|\nabla_{x}l(x)\right\|_{2}\left|\frac{1}{\lambda}p(l(x) + \mathcal{C}) - \int_{l(x)}^{l(x) + \mathcal{C}}g'(t - l(x))p(t)\mathrm{d}t\right|.
    \end{align}
    Since $g(\cdot)$ is strictly increasing, we have $g'(\cdot) > 0$. Therefore,
    \begin{align}
        & \left|\frac{1}{\lambda}p(l(x) + \mathcal{C}) - \int_{l(x)}^{l(x) + \mathcal{C}}g'(t - l(x))p(t)\mathrm{d}t\right| \nonumber \\
        = & \begin{cases}
            \frac{1}{\lambda}p(l(x) + \mathcal{C}) - \int_{l(x)}^{l(x) + \mathcal{C}}g'(t - l(x))p(t)\mathrm{d}t & \text{if } \frac{1}{\lambda}p(l(x) + \mathcal{C}) \geq \int_{l(x)}^{l(x) + \mathcal{C}}g'(t - l(x))p(t)\mathrm{d}t, \\
            \int_{l(x)}^{l(x) + \mathcal{C}}g'(t - l(x))p(t)\mathrm{d}t - \frac{1}{\lambda}p(l(x) + \mathcal{C}) & \text{otherwise}.
        \end{cases}
    \end{align}
    \begin{itemize}
        \item If $\frac{1}{\lambda}p(l(x) + \mathcal{C}) \geq \int_{l(x)}^{l(x) + \mathcal{C}}g'(t - l(x))p(t)\mathrm{d}t$, then
              \begin{align}
                  \left\|\nabla_{x}c_{\mathscr{F}}(x)\right\|_{2} \leq \left\|\nabla_{x}l(x)\right\|_{2} \cdot \frac{1}{\lambda}p(l(x) + \mathcal{C}) \leq \frac{C_{l}L_{c}}{\lambda}.
              \end{align}
              If $L_{c} = \frac{\lambda M_{c}}{C_{l}}$, then $\left\|\nabla_{x}c_{\mathscr{F}}(x)\right\|_{2} \leq M_{c}$.
        \item If $\frac{1}{\lambda}p(l(x) + \mathcal{C}) < \int_{l(x)}^{l(x) + \mathcal{C}}g'(t - l(x))p(t)\mathrm{d}t$, then
              \begin{align}
                  \left\|\nabla_{x}c_{\mathscr{F}}(x)\right\|_{2} \leq \left\|\nabla_{x}l(x)\right\|_{2}\int_{l(x)}^{l(x) + \mathcal{C}}g'(t - l(x))p(t)\mathrm{d}t \leq C_{l}C_{g}L_{c}\mathcal{C}.
              \end{align}
              If $L_{c} = \frac{M_{c}}{C_{l}C_{g}\mathcal{C}}$, then $\left\|\nabla_{x}c_{\mathscr{F}}(x)\right\|_{2} \leq M_{c}$.
    \end{itemize}
    Therefore, $L_{c} = \min\left\{\frac{\lambda M_{c}}{C_{l}}, \frac{M_{c}}{C_{l}C_{g}\mathcal{C}}\right\}$.     
\end{proof}



\subsection{Proof for Theorem~\ref{the:prediction_fairness_1}}
\begin{proof}
    When $t_{0} \in (C + \mathcal{C}, D)$, we can find $x_{1} \in \mathcal{X}$ such that $l(x_{1}) = t_{0} - \mathcal{C}$. From the definition of $\hat{Y}_{\mathscr{F}}(\cdot)$, we know that
    \begin{align}
        \hat{Y}_{\mathscr{F}}(x_{1}) = f(x_{1}; t_{0}) = 1.
    \end{align}
    For any positive constant $\epsilon$, we can find $x_{2} \in \mathcal{X}$ satisfying $l(x_{2}) = t_{0} - \mathcal{C} - \epsilon$, and we have
    \begin{align}
        \hat{Y}_{\mathscr{F}}(x_{2}) = f(x_{2}; t_{0}) = 0.
    \end{align}
    Since $l(x)$ satisfies
    \begin{align}
        \left|l(x_{1}) - l(x_{2})\right| \geq L_{l} \left\|x_{1} - x_{2}\right\|_{2},
    \end{align}
    we obtain
    \begin{align}
        \left\|x_{1} - x_{2}\right\|_{2} \leq \frac{\epsilon}{L_{l}}.
    \end{align}
    When $\epsilon < \frac{L_{l}}{M_{p}}$, we have
    \begin{align}
        \left|\hat{Y}_{\mathscr{F}}(x_{1}) - \hat{Y}_{\mathscr{F}}(x_{2})\right| > M_{p} \left\|x_{1} - x_{2}\right\|_{2}.
    \end{align}
\end{proof}

\subsection{Proof for Theorem~\ref{the:pred_fair_d}}\label{proof_pred_fair_d}
\begin{proof}
    Consider an arbitrary $x_{1} \in \mathcal{X}$, we know that $f(x_{1}; t) = 1$ if and only if there exists $x_{1}'$, such that
    \begin{align}
        l(x_{1}^{'}) \geq t ~~~ \text{and} ~~~ l(x_{1}^{'}) - l(x_{1}) \leq \mathcal{C}.
    \end{align}
    Therefore,
    \begin{align}
        \hat{Y}_{\mathscr{F}}(x_{1}) = \int_{C + \mathcal{C}}^{l(x_{1}) + \mathcal{C}}p(t)\mathrm{d}t.
    \end{align}
    Again, we have 
    \begin{align}
        \left|\hat{Y}_{\mathscr{F}}(x_{1}) - \hat{Y}_{\mathscr{F}}(x_{2})\right| & = \left|\int_{l(x_{1}) + \mathcal{C}}^{l(x_{2}) + \mathcal{C}}p(t)\mathrm{d}t\right| \nonumber \\
        & \leq L_{p}\left|l(x_{1}) - l(x_{2})\right|.
    \end{align}
    Because $\left\|\nabla_{x}l(x)\right\|_{2}$ is upper bounded by $C_{l}$, and $L_{p} = \frac{M_{p}}{C_{l}}$, we have
    \begin{align}
        \left|\hat{Y}_{\mathscr{F}}(x_{1}) - \hat{Y}_{\mathscr{F}}(x_{2})\right| \leq M_{p}\left\|x_{1} - x_{2}\right\|_{2}.
    \end{align}
\end{proof}

\section{Implementation Details}\label{append:implementation}
\subsection{Dataset}\label{appendix:law_school}
The Law School dataset contains 12 attributes. The information about them are displayed in Table~\ref{tab:law_school_data_info}.
\begin{table}[ht]
    \centering
    \caption{The meaning of attributes in the Law School dataset \cite{eds8531_lsac_eda}.}\label{tab:law_school_data_info}
    \begin{tabular}{ccc}
         \toprule
         Attribute & Description & Data Information \\
         \midrule
         decile1b & First-year law school GPA decile & integer, ranging from 1 to 10 \\
         decile3 & Third-year law school GPA decile & integer, ranging from 1 to 10 \\
         lsat & LSAT score & integer, ranging from 0 to 60 \\
         ugpa & Undergraduate GPA on a 4.0 scale & real values in [0, 4.0] \\
         zfygpa & Standardized first-year GPA (z-score) & real values in [-4.0, 4.0] \\
         zgap & Standardized final GPA (z-score) & real values in [-4.0, 4.0] \\
         fulltime & Enrollment status & 1: full-times, 0: part-time \\
         fam\_inc & Family income bracket & integer, ranging from 1.0 to 5.0 \\
         male & Gender indicator & 1: male, 0: female \\
         racetxt & Race category & binary, not clear the meaning of 0 and 1 \\
         tier & Law school tier & integer, ranging from 1 to 6 \\
         pass\_bar & Bar exam pass indicator & 1: pass, 0: not-pass \\
         \bottomrule
    \end{tabular}
\end{table}
Figure~\ref{fig:score_law_school} shows the estimation of $\Pr\{\text{bar\_pass} = 1|\text{zgpa}\}$ from the whole dataset. Since the conditional probability is an increasing function w.r.t zgpa, Assumption~\ref{assum:2} is satisfied. We can regard zgap as $l(x)$.
\begin{figure}
    \centering
    \includegraphics[width=\textwidth]{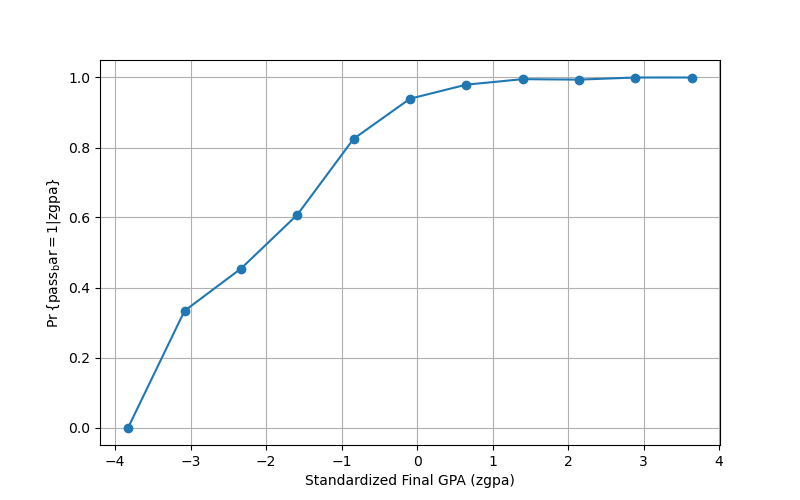}
    \caption{Estimated $\Pr\{\textit{bar\_pass} = 1|\textit{zgpa}\}$}
    \label{fig:score_law_school}
\end{figure}

\subsection{Computational Resources}\label{append:resources}
All experiments are conducted on a server equipped with 64 AMD EPYC 7313 16-Core CPUs. The server also includes 8 NVIDIA RTX A5000 GPUs (24GB each), although GPU resources are not utilized for our experiments.

Each experiment runs efficiently, completing 5 random seeds in under an hour. The optimization of the randomized classifier is computationally inexpensive. However, evaluating the Individual Fairness (IF) ratio is more time-consuming, as it requires comparing all pairs $(x_{i}, x_{j})$ in the dataset $\{(x_{i}, y_{i})\}_{i=1}^{N}$, resulting in a computational complexity of $\mathcal{O}(N^{2})$.

\section{Additional Results}\label{append:additional_results}
In this section, we displays the additional experiment results about measuring individual fairness metric (IF ratio) w.r.t. BRC.
\begin{table}[htbp]
    \centering
    \caption{Results on FICO dataset applying our randomized classifier with IF ratio measured w.r.t. cost.}\label{tab:exp_fico_if_c}
    \resizebox{1.0\textwidth}{!}{
        \begin{tabular}{@{}cccccc@{}}
            \toprule
            \multicolumn{1}{c}{\multirow{2}{*}{Method}} & \multicolumn{1}{c}{\multirow{2}{*}{Deterministic Classifier}} & \multicolumn{4}{c}{Randomized Classifier} \\
            &  & $L_{c} = 1$ & $L_{c} = 0.5$ & $L_{c} = 0.25$ & $L_{c} = 0.1$ \\
            \midrule
            F1 score & $0.986 \pm 0.001$ & $0.986 \pm 0.001$ & $0.986 \pm 0.001$ & $0.984 \pm 0.002$ & $0.975 \pm 0.003$ \\
            IF ratio &  $3.412 \pm 1.156$ & $0.866 \pm 0.041$ & $0.450 \pm 0.034$ & $0.227 \pm 0.018$ & $0.090 \pm 0.006$ \\
            S-DP & $0.355 \pm 0.020$ & $0.356 \pm 0.019$ & $0.356 \pm 0.018$ & $0.352 \pm 0.017$ & $0.355 \pm 0.015$ \\
            EO-DP & $0.016 \pm 0.011$ & $0.016 \pm 0.010$ & $0.019 \pm 0.010$ & $0.016 \pm 0.006$ & $0.026 \pm 0.011$ \\
            ED-DP & $0.016 \pm 0.011$ & $0.016 \pm 0.010$ & $0.019 \pm 0.010$ & $0.016 \pm 0.006$ & $0.026 \pm 0.011$ \\
            \bottomrule
        \end{tabular}
    }
\end{table}

\begin{table}[ht]
    \centering
    \caption{Results on Law School dataset applying our randomized classifier with IF ratio measured w.r.t. BRC.}
    \resizebox{1.0\textwidth}{!}{
        \begin{tabular}{@{}cccccc@{}}
            \toprule
            \multicolumn{1}{c}{\multirow{2}{*}{Method}} & \multicolumn{1}{c}{\multirow{2}{*}{Deterministic Classifier}} & \multicolumn{4}{c}{Randomized Classifier} \\
            &  & $L_{c} = 1$ & $L_{c} = 0.8$ & $L_{c} = 0.4$ & $L_{c} = 0.3$ \\
            \midrule
            F1 score & $0.680 \pm 0.013$ & $0.587 \pm 0.010$ & $0.585 \pm 0.012$ & $0.545 \pm 0.011$ & $0.545 \pm 0.011$ \\
            IF ratio & $3.776 \pm 3.039$ & $0.270 \pm 0.107$ & $0.150 \pm 0.085$ & $0.057 \pm 0.018$ & $0.036 \pm 0.018$ \\
            S-DP & $0.443 \pm 0.012$ & $0.476 \pm 0.019$ & $0.470 \pm 0.015$ & $0.399 \pm 0.019$ & $0.350 \pm 0.014$ \\
            EO-DP & $0.350 \pm 0.028$ & $0.433 \pm 0.041$ & $0.431 \pm 0.034$ & $0.357 \pm 0.040$ & $0.309 \pm 0.041$ \\
            ED-DP disparity & $0.362 \pm 0.021$ & $0.433 \pm 0.041$ & $0.431 \pm 0.034$ & $0.357 \pm 0.040$ & $0.309 \pm 0.041$ \\
            \bottomrule
        \end{tabular}
    }
\end{table}

\begin{table}[htbp]
    \centering
    \caption{Results on FICO dataset applying our randomized classifier with statistical parity constraint and IF ratio measured w.r.t. BRC.}
    \resizebox{1.0\textwidth}{!}{
        \begin{tabular}{@{}cccccc@{}}
            \toprule
                \multicolumn{1}{c}{\multirow{2}{*}{Method}} & \multicolumn{1}{c}{\multirow{2}{*}{Deterministic Classifier}} & \multicolumn{4}{c}{Randomized Classifier} \\
                & & $\Omega = 0.1$ & $\Omega = 0.08$ & $\Omega = 0.06$ & $\Omega = 0.04$ \\
                \midrule
                F1 score & $0.857 \pm 0.006$ & $0.818 \pm 0.006$ & $0.800 \pm 0.008$ & $0.779 \pm 0.008$ & $0.756 \pm 0.009$ \\
                IF ratio & $7.105 \pm 5.233$ & $0.868 \pm 0.047$ & $0.853 \pm 0.037$ & $0.829 \pm 0.025$ & $0.796 \pm 0.079$ \\
                S-DP & $0.115 \pm 0.022$ & $0.114 \pm 0.018$ & $0.094 \pm 0.022$ & $0.072 \pm 0.019$ & $0.050 \pm 0.015$ \\
                \bottomrule
        \end{tabular}
    }
\end{table}

\begin{table}[htbp]
    \centering
    \caption{Results on Law School dataset applying our randomized classifier with statistical parity constraint and IF ratio measured w.r.t. BRC.}
    \resizebox{1.0\textwidth}{!}{
        \begin{tabular}{@{}cccccc@{}}
            \toprule
            \multicolumn{1}{c}{\multirow{2}{*}{Method}} & \multicolumn{1}{c}{\multirow{2}{*}{Deterministic Classifier}} & \multicolumn{4}{c}{Randomized Classifier} \\
            & & $\Omega = 0.1$ & $\Omega = 0.08$ & $\Omega = 0.06$ & $\Omega = 0.04$ \\
            \midrule
            F1 score & $0.642 \pm 0.016$ & $0.554 \pm 0.016$ & $0.562 \pm 0.020$ & $0.567 \pm 0.017$ & $0.575 \pm 0.013$ \\
            IF ratio & $12.15 \pm 18.11$ & $0.166 \pm 0.055$ & $0.181 \pm 0.087$ & $0.153 \pm 0.080$ & $0.147 \pm 0.025$ \\
            S-DP & $0.047 \pm 0.038$ & $0.039 \pm 0.018$ & $0.036 \pm 0.017$ & $0.031 \pm 0.016$ & $0.026 \pm 0.014$ \\
            \bottomrule
        \end{tabular}
    }
    \label{tab:sp_law}
\end{table}



\end{document}